\RequirePackage{fix-cm}
\documentclass[smallextended]{svjour3}       
\smartqed  
\usepackage[colorlinks,linkcolor=red,anchorcolor=green,citecolor=blue]{hyperref}
\usepackage{graphicx}
\usepackage{amsfonts,amssymb}
\usepackage{amsmath}
\usepackage{enumerate}
\usepackage[ruled,vlined]{algorithm2e}
\usepackage[labelformat=simple]{subcaption}

\usepackage{verbatim}


\usepackage{xcolor}
\usepackage{lineno}

\DeclareMathOperator*{\argmin}{arg\,min}
%
%
%
%
%

\setlength{\textwidth}{15.2cm}
\setlength{\textheight}{234mm}

\begin{document}

\title{Decomposition of Longitudinal Deformations via Beltrami Descriptors \thanks{HKRGC  GRF  (Project  ID:2130549, Reference ID: 14306917), CUHK Direct Grant (Project ID: 4053292)}
}

\author{Ho LAW  \and
        Chun Yin SIU \and
        Lok Ming LUI
}


\institute{Ho LAW \at
            The Chinese University of Hong Kong \\
              \email{hlawab@connect.ust.hk}
           \and
           Chun Yin SIU \at
           Cornell University \\
           \email{cs2323@cornell.edu}
           \and
           Lok Ming LUI \at
           The Chinese University of Hong Kong \\
           \email{lmlui@math.cuhk.edu.hk}
}

\date{Received: date / Accepted: date}

\maketitle

\begin{abstract}
We present a mathematical model to decompose a longitudinal deformation into normal and abnormal components. The goal is to detect and extract subtle abnormal deformation from periodic motions in a video sequence. It has important applications in medical image analysis. To achieve this goal, we consider a representation of the longitudinal deformation, called the {\it Beltrami descriptor}, based on quasiconformal theories. The Beltrami descriptor is a complex-valued matrix. Each longitudinal deformation is associated to a Beltrami descriptor and vice versa. To decompose the longitudinal deformation, we propose to carry out the low rank and sparse decomposition of the Beltrami descriptor. The low rank component corresponds to the periodic motions, whereas the sparse part corresponds to the abnormal motions of a longitudinal deformation. Experiments have been carried out on both synthetic and real video sequences. Results demonstrate the efficacy of our proposed model to decompose a longitudinal deformation into regular and irregular components.
\keywords{Longitudinal Deformation, Beltrami Descriptor; Low-rank, sparse, quasiconformal}
\end{abstract}

\section{Introduction}\label{sec: introduction}
Deformation analysis plays a significant role in medical image analysis\cite{zitova_2003,sotiras_2013}. Given a longitudinal medical image sequence, the spatio-temporal analysis can be carried out through studying deformations between images, which is useful to understand the pathology for disease analysis. In particular, this paper aims to filter out normal longitudinal deformations from the abnormal which could be caused by certain diseases in a medical video. By normal deformations, we refer to a series of deformations that follows a periodic motion. Suppose there is a medical footage that records a periodic motion, for instance, a beating heart under a normal cardiac motion. Then the normal deformation is this periodic motion. On the other hand, some patients may occasionally suffer from some diseases like Premature Atrial Contraction and Premature Ventricular Contraction that would perturb this kind of normal cardiac deformation\cite{heaton_2020}. In this case, we call this kind of perturbations the abnormal deformation. Our goal is to extract the abnormal deformation (perturbations) from the normal (periodic) deformation, which are originally combined together. In order to analyze the deformities efficiently and accurately, the capability to decompose a longitudinal deformation into regular and irregular motions is necessary. For instance, during normal cycles of contraction and expansion of a lung when breathing, some parts of the lung may tremble unnaturally\cite{gibson_1984}. Combined with the normal motion, doctors might have difficulty to discern the abnormal motion. It thus calls for the need of a mathematical model to the decomposition of longitudinal deformation into normal and abnormal components.

To achieve this goal, an effective representation of the longitudinal deformation is necessary. An intuitive representation is based on the deformation vector fields obtained via image registration techniques. As vector fields cannot effectively capture the geometric information of deformations, the decomposition based on vector fields is usually unable to extract meaningful regular and irregular components and evidence will be provided in later section. The difficulty is that the extracted components are not in a bijective correspondence with flips or overlaps, which are unnatural and unrealistic for deformations of anatomical structures. In this work, we propose to consider a representation of the longitudinal deformation, called the {\it Beltrami descriptor}, in quasiconformal theories\cite{gardiner_2000,lui_2012,chan2018topology}. The Beltrami descriptor is a complex-valued matrix, which captures the geometric information of the longitudinal deformation. Hence, the geometric distortion and bijectivity of the deformation can be easily controlled\cite{lam_2014}. More importantly, it is an effective representation since each longitudinal deformation is associated to a unique Beltrami descriptor and vice versa. The associated deformation is also stable under the perturbation of the descriptor. As such, the manipulation of the longitudinal deformations through the Beltrami descriptors is not sensitive to the error of the descriptors, which is crucial for the decomposition. To decompose the longitudinal deformation, we propose to extract the low rank part and sparse part of the Beltrami descriptor. The periodic motion of the deformation is characterized by the low rank component of the descriptor. On the other hand, abnormal deformation is characterized by the sparse part of the Beltrami descriptor. This low rank and sparse pursuit problem can be relaxed to a complex-valued Robust Principal Component Analysis (RPCA) problem\cite{cai_2008,wright_2009}, which can be solved by alternating minimization method with multipliers (ADMM)\cite{yuan_2009}. We test our proposed model on both synthetic and real video sequences. Experimental results illustrate the efficacy of our proposed method for the decomposition of longitudinal deformations.

In short, our contributions of this paper are three-folded.
\begin{enumerate}
    \item First, we propose to consider a special representation of longitudinal deformations, called the Beltrami descriptor, to decompose the deformation. The Beltrami descriptor captures the geometric information of the deformation, and hence manipulating the descriptor allows us to process and analyze the deformation according to its geometry.
    \item Secondly, we consider the low rank and sparse decomposition of the Beltrami descriptor to the decomposition of longitudinal deformation into regular and irregular components. To the best of our knowledge, it is the first work to the decomposition of longitudinal deformation via low rank and sparse pursuit. 
    \item Thirdly, in practical applications, it is often desirable to extract bijective irregular longitudinal component, which detect and capture the abnormal deformation from normal periodic motion. In this work, we theoretically show that the extracted irregular component is bijective under a suitable choice of parameters.    
\end{enumerate}

The paper is organized as follow: in section \ref{sec: previous work}, we will briefly review some previous works related to this paper. In section \ref{sec: math bg}, some necessary mathematical tools will be described. The Beltrami descriptor and our proposed decomposition algorithm will be explained in details in section \ref{sec: decomposition of deformations}. Last but not least, experimental results will be shown in section \ref{sec: experiment}, and we cap off with a conclusion and discussion of future works in section \ref{sec: conclusion}. \par

\section{Previous Work}\label{sec: previous work}

Shape analysis of structures from images plays a fundamental role in various fields, such as computer visions and medical image analysis. One commonly used approach is based on the analysis of the deformation fields between corresponding images. Deformation fields between images are often obtained through the image registration process. Registration aims to establish a meaningful one-to-one dense correspondence between images. Over years, various registration methods have been proposed, which can be categorized into feature-based \cite{yasein_2008,islam_2013,lee2016landmark,lui2014teichmuller}, intensity-based \cite{vercauteren_2009,trouve_2005}, and combined-feature-intensity-based methods \cite{yao_2001,lam_2014,lui2014geometric}. Amongst these methods, quasiconformal-based registration models have been widely used \cite{lui2014geometric,lee2016landmark,qiu2020inconsistent,LuiHP,lui2010optimized,lam_2014,wang2007brain,lui2012optimization,choi2015flash,lui2014teichmuller},, with which our model in this paper is built upon. For instance, in \cite{lam_2014}, Lam et al. proposed an optimization model based on quasiconformal geometry to obtain landmark-based and intensity-based registration between images or surfaces.

Once the deformation fields are obtained, different shape analysis methods have been recently proposed. In \cite{LuiHP,LuiIPI,lui_2012,LuiTooth,LuiWaveletBC,LuiQCAD,choi2020shape,zeng2010shape,lui2013shape,chan2020quasi} , Lui et al.  proposed to detect shape variation based on the Beltrami coefficients of the deformation field as well as the curvature mismatching. The method has been applied for Alzhemier's disease analysis \cite{LuiQCAD} and tooth morphometry \cite{LuiTooth}. A quasiconformal metric for deformation classification is also introduced to classify the left ventricle deformations of myopathic and control subjects \cite{Taimuri}. The wavelet support vector machine (WSVM) has been proposed to study the deformation field \cite{PengWavelet}. Algorithms to analyze deformation field with different geometric scales and directions have also been recently developed. The basic idea is to decompose the vector field representing the deformation into various meaningful components. For instance, Tong et al. \cite{TongMultiVector} proposed a variational model to decompose a vector field into the divergence-free part, the curl-free part, and the harmonic part using the idea of Helmholtz-Hodge decomposition. Recently, the morphlet transform has been proposed to obtain a multi-scale representation for diffeomorphisms \cite{Morphlet}. Wavelet tranform on the Beltrami coefficient of the deformation field has also been proposed to decompose a deformation into multiple components with various geometric scales \cite{LuiWaveletBC}. However, to the best of our knowledge, an effective method to analyze time-dependent longitudinal deformation is still lacking.

In this work, our goal is to decompose a longitudinal deformation into normal and abnormal components. To do so, Robust Principal Component Analysis (RPCA) will be performed on the descriptor of the longitudinal deformation. RPCA has been widely studied in recent years and have been used for various applications. For example, Zhou et. al.\cite{zhou_2011} proposed ``GoDec'' that was adding one more noise term, so as to remove the noise captured by cameras. Also, Zhou et. al.\cite{zhou_2013} made an improvement by imposing one more constraint to ensure the moving objects are small and continuous pieces. Li et. al.\cite{li_2019} suggested another method, SSC-RPCA, that could work well when the background exhibits some minor motion, like flowing water of a lake or a river, or the moving object does not move fast enough, with more terms into the original RPCA model to force the model to group different regions of the moving object in a roughly segmented video. Oreifej et. al.\cite{oreifej_2013} presented another term to model turbulence to capture moving object in a badly turbulence-corrupted video. Sobral et. al.\cite{sobral_2015} proposed a way to improve detection of moving object by imposing shape constraints. Javed et. al.\cite{javed_2015} put forward a superpixel-based matrix decomposition method with maximum norm regularizations and structured sparsity constraints to deal with the real-time challenge. The model designed by Ebdai et. al.\cite{ebadi_2018} estimates the support of the foreground regions with a superpixel generation step, and then spatial coherence can be imposed. Cao et. al.\cite{cao_2016} presented a novel method of RPCA, using tensor decomposition, as well as 3D total variation to enforce spatio-temporal continuity of the moving objects.

To compute the RPCA effectively, various numerical methods have been proppsed. For example, Lin et. al.\cite{lin_2009} compared two methods: accelerated proximal gradient algorithm applied to the primal and gradient algorithm applied to the dual problem. Another well-know optimization method, which is going to be used in this paper, is the {\bf A}lternating {\bf D}irection {\bf M}ethod ({\bf ADM}) proposed by Yuan et. al.\cite{yuan_2009}, or similarly the Augmented Lagrange Multiplier Method proposed by Lin et. al.\cite{lin_2010}.

\section{Mathematical Background}\label{sec: math bg}
In this section, we will review some mathematical background related to this work.

\begin{figure}[t]
\centering
\includegraphics[height=1.35in]{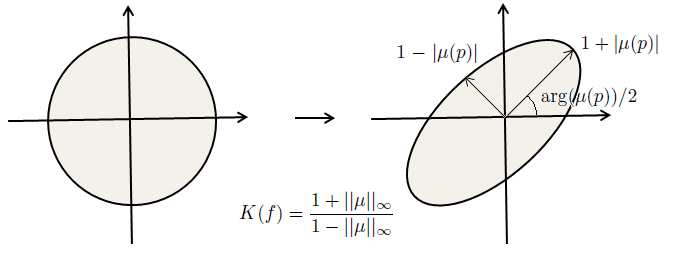}
\caption{Illustration of how the Beltrami coefficient determines the conformality distortion.}
\label{fig: illustration1}
\end{figure}

\subsection{Quasiconformal theories}
In the following, some basic ideas of quasiconformal geometry are discussed. For details, we refer readers to \cite{gardiner_2000,lehto_2011}.

A surface $S$ with a conformal structure is called a \emph{Riemann surface}. Conformal structure on a Riemann surface, $S$, is an equivalence class of metrics.

\begin{equation}\label{eq: def conformal structure}
    [h] = \{ e^{2u}h \ | \ u \in C^{\infty}(S) \},
\end{equation}
\noindent where $h$ is some Riemannian metric on $S$.
Given two Riemann surfaces $M$ and $N$, a map $f:M\to N$ is \emph{conformal} if it preserves the surface metric up to a multiplicative factor called the {\it conformal factor}. An immediate consequence is that every conformal map preserves angles. With the angle-preserving property, a conformal map effectively preserves the local geometry of the surface structure. 
A generalization of conformal maps is the \emph{quasiconformal} maps, which are orientation preserving homeomorphisms between Riemann surfaces with bounded conformality distortion, in the sense that their first order approximations take small circles to small ellipses of bounded eccentricity \cite{gardiner_2000}. Mathematically, $f \colon \mathbb{C} \to \mathbb{C}$ is quasiconformal provided that it satisfies the Beltrami equation:
\begin{equation}\label{beltramieqt}
\frac{\partial f}{\partial \overline{z}} = \mu(z) \frac{\partial f}{\partial z}.
\end{equation}
\noindent for some complex-valued function $\mu$ satisfying $||\mu||_{\infty}< 1$, where $\mu$ is called the \emph{Beltrami coefficient}, which is a measure of non-conformality. More precisely, it measures how far the map at each point is deviated from a conformal map. In particular, the map $f$ is conformal around a small neighborhood of $p$ when $\mu(p) = 0$. When $\mu(z) \equiv 0$, the Beltrami's equation becomes the Cauchy-Riemann equation and so the map is a conformal map. As such, a quasiconformal map can be regarded as a generalization of conformal map that allows bounded conformality distortions. Infinitesimally, around a point $p$, $f$ may be expressed with respect to its local parameter as follows:
\begin{equation}
\begin{split}
f(z) & = f(p) + f_{z}(p)z + f_{\overline{z}}(p)\overline{z} \\
& = f(p) + f_{z}(p)(z + \mu(p)\overline{z}).
\end{split}
\end{equation}

Obviously, $f$ is not conformal if and only if $\mu(p)\neq 0$. Locally, $f$ may be considered as a map composed of a translation to $f(p)$ together with a stretch map $S(z)=z + \mu(p)\overline{z}$, which is composed by a multiplication of $f_z(p),$ which is conformal. All the conformal distortion of $S(z)$ is caused by $\mu(p)$. $S(z)$ is the map that causes $f$ to map a small circle to a small ellipse. From $\mu(p)$, we can determine the angles of the directions of maximal magnification and shrinking and the amount of them as well. Specifically, the angle of maximal magnification is $\arg(\mu(p))/2$ with magnifying factor $1+|\mu(p)|$; the angle of maximal shrinking is the orthogonal angle $(\arg(\mu(p)) -\pi)/2$ with shrinking factor $1-|\mu(p)|$. Thus, the Beltrami coefficient $\mu$ gives us lots of information about the properties of the map (see Figure \ref{fig: illustration1}).

The maximal dilation of $f$ is given by:
\begin{equation}
K(f) = \frac{1+||\mu||_{\infty}}{1-||\mu||_{\infty}}.
\end{equation}

Given a Beltrami coefficient $\mu:\mathbb{C}\to \mathbb{C}$ with $\|\mu\|_\infty < 1$. There always exists a quasiconformal mapping from $\mathbb{C}$ onto itself which satisfies the Beltrami equation in the distributions sense \cite{gardiner_2000}. More precisely, we have the following theorem: \par

\bigskip

\begin{theorem}[Measurable Riemann Mapping Theorem] \label{thm: Beltrami}
Suppose $\mu: \mathbb{C} \to \mathbb{C}$ is Lebesgue measurable satisfying $\|\mu\|_\infty < 1$, then there exists a quasiconformal homeomorphism $\phi$ from $\mathbb{C}$ onto itself, which belongs to the Sobolev space $W^{1,2}(\mathbb{C})$ and satisfies the Beltrami Equation (\ref{beltramieqt}) in the distributions sense. Furthermore, by fixing 0, 1 and $\infty$, the associated quasiconformal homeomorphism $\phi$ is uniquely determined.
\end{theorem}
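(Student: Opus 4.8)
The plan is to follow the classical Ahlfors--Bers argument: reduce the general measurable case to the case of a smooth, compactly supported Beltrami coefficient; for that model case, produce $\phi$ explicitly by inverting a singular integral operator; then recover the general case by mollification together with a normal-families compactness argument. The uniqueness under the $0,1,\infty$ normalization is a soft consequence of Weyl's lemma.

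First I would treat the model case $\mu \in C_c^\infty(\mathbb{C})$ with $\|\mu\|_\infty = k < 1$. Introduce the Cauchy (Pompeiu) transform $P$ and the Beurling transform $T = \partial_z P$, so that $\partial_{\bar z}(Pg) = g$ and $\partial_z(Pg) = Tg$ in the sense of distributions. Seeking $\phi$ of the form $\phi(z) = z + (Pg)(z)$ turns the Beltrami equation into the integral equation
\[
g = \mu + \mu\,Tg.
\]
Now $T$ is a Calder\'on--Zygmund operator that is a unitary operator on $L^2(\mathbb{C})$ and, by Calder\'on--Zygmund theory, bounded on $L^p(\mathbb{C})$ for all $1<p<\infty$ with $\|T\|_{L^p}\to 1$ as $p\to 2$. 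Hence one may fix $p>2$ with $k\,\|T\|_{L^p} < 1$, so $I - \mu T$ is invertible on $L^p$ and the Neumann series $g = \sum_{n\ge 0}(\mu T)^n\mu$ converges there (and has compact support since $\mu$ does). Then $\phi = z + Pg$ belongs to $W^{1,p}_{\mathrm{loc}}$ (hence is continuous, since $p>2$), satisfies $\phi_{\bar z} = \mu\,\phi_z$ a.e., has positive Jacobian $|\phi_z|^2(1-|\mu|^2)$ a.e., and behaves like $z + o(1)$ at infinity; a standard argument (degree theory / Stoilow factorization) upgrades $\phi$ to an orientation-preserving homeomorphism of $\mathbb{C}$, quasiconformal with maximal dilation at most $(1+k)/(1-k)$. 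Normalizing by composing with a suitable affine map enforces the fixing of $0,1,\infty$.

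Next I would pass to a general measurable $\mu$. A standard decomposition of $\mu$ into a part supported in the unit disk and a part supported outside, the latter handled by conjugation with $z\mapsto 1/z$, reduces matters to compactly supported $\mu$. Such a $\mu$ is the a.e.\ limit of $\mu_n \in C_c^\infty$ with $\|\mu_n\|_\infty \le k$, all supported in a fixed disk, and $\mu_n \to \mu$ in every $L^q_{\mathrm{loc}}$. Let $\phi_n$ be the normalized solutions from the model case. The bound $\|\mu_n T\|_{L^2\to L^2}\le k$ gives a uniform $L^2$ bound on $g_n$, hence on $\phi_{n,\bar z}$ and $\phi_{n,z}-1$; and since the $\phi_n$ are $K$-quasiconformal for the common $K=(1+k)/(1-k)$ with the same normalization, they form a normal family. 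Extract a locally uniformly convergent subsequence $\phi_n\to\phi$; weak $L^2_{\mathrm{loc}}$ convergence of the derivatives then yields $\phi\in W^{1,2}_{\mathrm{loc}}$ with $\phi_{\bar z} = \mu\,\phi_z$ distributionally, and the limit of $K$-quasiconformal homeomorphisms pinned at $0,1,\infty$ is again a $K$-quasiconformal homeomorphism (the normalization prevents degeneration). For uniqueness, if $\phi_1,\phi_2$ both solve the equation with coefficient $\mu$ and both fix $0,1,\infty$, the composition rule for Beltrami coefficients shows $F=\phi_1\circ\phi_2^{-1}$ has Beltrami coefficient $0$ a.e.; by Weyl's lemma $F$ is a conformal automorphism of $\mathbb{C}$, i.e.\ $z\mapsto az+b$, and fixing $0$ and $1$ forces $F=\mathrm{id}$, so $\phi_1=\phi_2$.

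The main obstacle is the model case: establishing the $L^p$-mapping properties of the Beurling transform with operator norm tending to $1$ as $p\to2$ (a genuine piece of harmonic analysis), and then verifying that the $W^{1,p}$ solution $\phi = z + Pg$ of the integral equation is actually a homeomorphism rather than merely a Sobolev solution. Everything downstream --- the mollification limit and the uniqueness --- is comparatively routine once compactness of normal families of quasiconformal maps and the composition formula for Beltrami coefficients are available.
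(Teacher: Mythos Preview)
Your outline is the classical Ahlfors--Bers argument and is essentially correct as a sketch. However, the paper does not actually prove this theorem: it is stated in the mathematical background section as a standard result from quasiconformal theory, with a reference to \cite{gardiner_2000}, and is used only to justify that Beltrami coefficients determine deformations up to normalization. So there is no proof in the paper to compare against; you have supplied far more than the paper does.

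One small caution on your sketch: the statement as written claims $\phi \in W^{1,2}(\mathbb{C})$ globally, but your argument (correctly) only delivers $\phi \in W^{1,2}_{\mathrm{loc}}(\mathbb{C})$, since $\phi_z - 1 \in L^2$ rather than $\phi_z \in L^2$. This is a wrinkle in the theorem's phrasing rather than a gap in your reasoning.
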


\bigskip

Theorem \ref{thm: Beltrami} suggests that under a suitable normalization, a homeomorphism from $\mathbb{C}$ or $\mathbb{D}$ onto itself can be uniquely determined by its associated Beltrami coefficient.

\subsection{Robust Principal Component Analysis (RPCA)}
The RPCA problem is stated as follow: Suppose we are given a matrix $M \in \mathbb{R}^{m \times n}$. Then we would like to solve the following minimisation problem:
\begin{align}\label{eq: initial rpca}
\min\limits_{\mathcal{N},\mathcal{A}} \mathrm{rank}(\mathcal{N}) + \alpha \|\mathcal{A}\|_{0}  \text{,  such that } M=\mathcal{N}+\mathcal{A};
\end{align}
where $\mathcal{N}$ and $\mathcal{A}$ are supposed to be a low-rank and a sparse matrix respectively, and $\alpha$ a parameter describing the trade off between the rank of the low-rank matrix and the $L_{0}$ norm of the sparse matrix. Since the above problem is NP-hard, we make appeal to a common relaxation, namely:
\begin{align}\label{eq: relaxed rpca}
\min\limits_{\mathcal{N},\mathcal{A}} \| \mathcal{N} \|_{\ast} + \alpha \|\mathcal{A}\|_{1}  \text{,  such that } M=\mathcal{N}+\mathcal{A}
\end{align}
Given that Equation (\ref{eq: relaxed rpca}) is a convex optimization problem, the ADM approach suggested by Yuan et. al.\cite{yuan_2009} is a suitable method. Namely, the Augmented Lagrangian function of Equation (\ref{eq: relaxed rpca}) is:
\begin{equation}\label{eq: aug lagrangian}
\mathcal{L}(\mathcal{N},\mathcal{A},Z;M) = \|\mathcal{N}\|_{\ast} + \alpha \|\mathcal{A}\|_{1} - \langle Z, \mathcal{N}+\mathcal{A}-M \rangle + \frac{\beta}{2}\|\mathcal{N}+\mathcal{A}-M\|^{2}_{2}
\end{equation}
where $Z$ is the multiplier of the linear constraint, $\beta$ is the penalty parameter. Here, we use $\langle \cdot, \cdot \rangle$ to denote the trace inner product. A simple iterative scheme is as follow:
\begin{equation}\label{eq: iter scheme}
\begin{cases}
\mathcal{N}^{k+1} = \argmin_{\mathcal{N} \in \mathbb{R}^{m \times n}} \mathcal{L}(\mathcal{N},\mathcal{A}^{k},Z^{k};M) \\
\mathcal{A}^{k+1} = \argmin_{\mathcal{A} \in \mathbb{R}^{m \times n}} \mathcal{L}(\mathcal{N}^{k+1},\mathcal{A},Z^{k};M) \\
Z^{k+1} = Z^{k} - \beta(\mathcal{N}^{k+1}+\mathcal{A}^{k+1}-M) \\
\end{cases}
\end{equation}
\cite{yuan_2009,ma_2009,cai_2008,tibshirani_1996} showed that there are closed formulas to update $\mathcal{N}^{k+1},\mathcal{A}^{k+1}$ and, obviously, $Z^{k+1}$ at each step. To solve for $\mathcal{A}^{k+1}$, we can use the explicit solution:
\begin{equation}\label{eq: update S}
   \mathcal{A}^{k+1} = \frac{1}{\beta}Z^{k} - \mathcal{N}^{k} + M - P_{\Omega^{\alpha / \beta}_{\infty}}\left( \frac{1}{\beta} Z^{k} - \mathcal{N}^{k} + M \right)
\end{equation}
where $P_{\Omega^{\alpha / \beta}_{\infty}}$ denoted the Euclidean projection onto $\Omega^{\alpha / \beta}_{\infty} := \{ X \in \mathbb{R}^{n \times n} \ | \ -\alpha / \beta \leq X_{ij} \leq \alpha / \beta \}$. For the subproblem $\mathcal{N}^{k+1}$, the explicit solution is
\begin{equation}\label{eq: update L}
   \mathcal{N}^{k+1} = U^{k+1}\mathrm{diag} \left( \max\{ \sigma_{i}^{k+1} - \frac{1}{\beta},0 \} \right) (V^{k+1})^{T}
\end{equation}
where $U^{k+1},V^{k+1},\sigma_{i}^{k+1}$ are obtained by SVD that is:
\begin{equation}\label{eq: svd update}
   M - \mathcal{A}^{k+1} + \frac{1}{\beta} Z^{k} = U^{k+1}\Sigma^{k+1}(V^{k+1})^{T} \quad \text{with} \quad \Sigma^{k+1} = \mathrm{diag} \left( \{ \sigma_{i}^{k+1} \}_{i=1}^{r} \right) 
\end{equation}


\section{Decomposition of Longitudinal Deformations}\label{sec: decomposition of deformations}

In this section, we explain our proposed main algorithm for the decomposition of longitudinal deformations. The goal is to separate abnormal deformations from normal deformations. To achieve this goal, it is necessary to have an effective representation of longitudinal deformations. The longitudinal deformation has to be easily restored from the corresponding representation. In addition, an effective algorithm to decompose the representation is also required. \par

\subsection{Representation of longitudinal deformations.}
In this work, we consider to represent the longitudinal deformations based on quasiconformal theories. An effective representation of longitudinal deformations should satisfy the following criteria. 

\begin{enumerate}
    \item First, the representation should capture the geometric information about the deformations. More precisely, it should describe the local geometric distortions created by the deformation mappings, so that the decomposed result should contain information about geometric distortion.
    \item The corresponding longitudinal deformations can be restored from the representation by Linear Beltrami Solver(LBS) which will be detailed in later section, so that the deformation fields can be obtained after the decomposition of the representation is carried out.
    \item The bijectivity of the corresponding deformations can be controlled during the manipulation of the representation with theoretical support. In other words, the corresponding deformations will not be severely corrupted during the decomposition process of the representation.
\end{enumerate}

\noindent To achieve these objectives, we will consider a longitudinal deformation matrix based on the Beltrami coefficients.  Suppose $\{I_i\}_{i=1}^t$ are the video frames, each of size $m\times n$, capturing the longitudinal data. Let $I_{ref}$ be a reference image. For each frame $I_j$, we compute the image registration $f_j: \Omega \to \Omega$ from $I_{ref}$ to $I_j$. Here, $\Omega$ refers to the rectangular image domain. The image registration can be computed using existing registration algorithms. In this work, the quasiconformal image registration method is applied.

Note that the image domain $\Omega$ is discretized into uniformly distributed pixels. As such, we can consider that $\Omega$ is discretized by regular triangulation $\{V,E,F\}$, where $V$ is the collection of vertices given by pixels. $E$ and $F$ are the collections of edges and faces respectively. With these notations, we assume $f_i:= ({\bf u}_i, {\bf v}_i)$, where ${\bf u}_i:V\to \mathbb{R}$ and ${\bf v}_i:V\to \mathbb{R}$ are the coordinate functions defined on every vertices. $f_i$ is regarded as piecewise linear on each face. The quasiconformality or local geometric distortion of $f_i$ can then be measured by the Beltrami coefficient. 

For the piecewise linear map $f_i$, we compute its Beltrami Coefficient by the approximation of its partial derivatives on each face $T\in F$. The restriction of $f_i$ on each face $T$ can be written as
\begin{equation}\label{eq: f on T}
   f_i|_{T}(x,y) = 
   \begin{pmatrix}
    a_{T}x+b_{T}y+r_{T} \\
    c_{T}x+d_{T}y+q_{T} \\
   \end{pmatrix}
\end{equation}
Hence, $D_{x}f_i(T) = a_{T} + ic_{T}$ and $D_{y}f_i(T) = b_{T} + id_{T}$. Then the gradient $\nabla_{T}f_i$ can be obtained by solving:
\begin{equation}\label{eq: grad f on T}
   \begin{pmatrix}
    v_1 - v_0 \\
    v_2 - v_0 \\
   \end{pmatrix}
   \begin{pmatrix}
    a_T & b_T \\
    c_T & d_T \\
   \end{pmatrix} = 
   \begin{pmatrix}
    {\bf u}(v_1) - {\bf u}(v_0) & {\bf u}(v_2) - {\bf u}(v_0)\\
    {\bf v}(v_1) - {\bf v}(v_0) & {\bf v}(v_2) - {\bf v}(v_0) \\
   \end{pmatrix}
\end{equation}
where $v_0, v_1$ and $v_2$ are the three vertices of the face $T$. By solving the above linear system, $a_{T},b_{T},c_{T},d_{T}$ can be computed. And the Beltrami coefficient of $f_i$ on $T$ can be obtained by
\begin{equation}\label{eq: discrete bc}
   \mu_i(T) = \frac{(a_{T}-d_{T}) + i(c_{T}+b_{T})}{(a_{T}+d_{T}) + i(c_{T}-b_{T})} 
\end{equation}

We thus have the following definition of {longitudinal deformation descriptor} to represent the longitudinal deformations.

\begin{definition}[Longitudinal deformation descriptor]\label{def:BCdeformationmatrix}
With the notations above, the longitudinal deformation descriptor $\mathcal{L}^{\mu}$ for $\{f_i\}_{i=1}^t$ is a $mn\times t$ complex-valued matrix given by
\begin{equation}
    \mathcal{L}^{\mu} = \begin{pmatrix}
\mid &\mid  &  &\mid \\ 
\mu_{1} &\mu_{2}  & \cdots & \mu_{t}\\ 
\mid &\mid  &  &\mid \\ 
\end{pmatrix}
\end{equation}
\end{definition}

$\mathcal{L}^{\mu}$ is formulated using Beltrami coefficients, which capture the local geometric distortions under the longitudinal deformations. As it will be explained in the next subsection, $\mathcal{L}^{\mu}$ has a one-one correspondence with the longitudinal deformations. In other words, given $\mathcal{L}^{\mu}$, the associated longitudinal deformations can be reconstructed. On the other hand, according to quasiconformal theories, the deformation $f_j$ is bijective (or folding-free) if $||\mu_j||_{\infty}<1$.

\subsection{Reconstruction of longitudinal deformations from descriptors.}
In the last subsection, we introduce the descriptor $\mathcal{L}^{\mu}$ to represent the longitudinal deformations. In order to utilize the descriptor, a reconstruction algorithm from the descriptor to the corresponding longitudinal deformations is necessary. 

Let's discuss how the longitudinal deformations can be reconstructed from $\mathcal{L}^{\mu}$. Consider $f_j|_{T}$ restricted to a triangle $T\in F$. Suppose the three vertices of $T$ is given by $v_0, v_1$ and $v_2$, whose coordinates are given by $v_k = (g_k, h_k)$ for $k=0,1$ or $2$. $v_0, v_1$ and $v_2$ are deformed by $f_j|_{T}$ to $w_0, w_1$ and $w_2$, whose coordinates are given by $w_k = (s_k,t_k)$ for $k=0,1,2$. Denote $\mu_j(T)= \rho_j + i \tau_j$. Let $\gamma_1(T) = \frac{(\rho_T-1)^2+\tau_T^2}{1-\rho_T^2-\tau_T^2}$, $\gamma_2(T) = \frac{-2\tau_T}{1-\rho_T^2-\tau_T^2}$ and $\gamma_1(T) = \frac{(1+\rho_T)^2+\tau_T^2}{1-\rho_T^2-\tau_T^2}$.

By comparing the real and imaginary parts, Equation (13) can be formulated as follows:
\begin{equation}
    \begin{split}
        a_T &= \alpha_T^0 s_0 + \alpha_T^1 s_1 + \alpha_T^2 s_2;\\
        b_T &= \beta_T^0 s_0 + \beta_T^1 s_1 + \beta_T^2 s_2;\\
        c_T &= \alpha_T^0 t_0 + \alpha_T^1 t_1 + \alpha_T^2 t_2;\\
        d_T &= \beta_T^0 t_0 + \beta_T^1 t_1 + \beta_T^2 t_2.
    \end{split}
\end{equation}
\noindent where 
\begin{equation}
    \begin{split}
        \alpha_T^0 &= (h_2-h_3)/\mathcal{A}_T; \alpha_T^1 = (h_2-h_0)/\mathcal{A}_T; \alpha_T^2 = (h_0-h_1)/\mathcal{A}_T;\\
        \beta_T^0 &= (g_2-g_3)/\mathcal{A}_T; \beta_T^1 = (g_2-g_0)/\mathcal{A}_T; \beta_T^2 = (g_0-g_1)/\mathcal{A}_T;
    \end{split}
\end{equation}
\noindent Here, $\mathcal{A}_T$ refers to the area of $T$. According to computational Quasiconformal Teichm\"{u}ller Theory \cite{gardiner_2000}, $a_T$, $b_T$, $c_T$ and $d_T$ also satisfy the following linear equations:
\begin{equation}
\begin{split}
\sum_{T\in N_i} \alpha_T^i[\gamma_1(T) a_T + \gamma_2(T) b_T]  + \beta_T^i[\gamma_2(T) a_T + \gamma_3(T) b_T] &= 0 ;\\
     \sum_{T\in N_i} \alpha_T^i[\gamma_1(T) c_T + \gamma_2(T) d_T]  + \beta_T^i[\gamma_2(T) c_T + \gamma_3(T) d_T] & = 0;    
\end{split}
\end{equation}
\noindent where $N_i$ denotes the set of faces attached to the vertex $v_i$.  Combining Equation (18) and (20), we obtain a linear system to solve for the coordinate functions ${\bf u}_j$ and ${\bf v}_j$ of $f_j$, subject to a given boundary condition. In practice, we usually set $f_j$ to be an identity map on the boundary as the boundary condition. Hence, we have $D_j f_j = D_j  ({\bf u}_j, {\bf v}_j) = ({\bf b}^1_j, {\bf b}^2_j)$, where $D_j$ is a $mn\times mn$ matrix $D_j$ and $({\bf b}^1_j, {\bf b}^2_j)$ is a $mn\times 2$ matrix given by the above non-singular linear system.

In summary, given $\mathcal{L^{\mu}}$, one can reconstruct the longitudinal deformations via solving a big linear system:
\begin{equation}\label{LBS}
    {\widetilde{\mathcal{D}}} {\bf f} = \begin{pmatrix}
D_1 &  &  & \\ 
 &D_{2}  &  & \\ 
&  & \ddots & \\
& & & D_t
\end{pmatrix}\begin{pmatrix}f_1\\
f_2\\
\vdots\\
f_t
\end{pmatrix} = \begin{pmatrix}
\mid & \mid\\
{\bf b}^1 & {\bf b}^2 \\
\mid & \mid
\end{pmatrix}:= {\bf b}
\end{equation}
\noindent where $\widetilde{\mathcal{D}}$ is a $mnt\times mnt$ block diagonal matrix, where $t$ is the number of frames, and hence the linear system can be solved in parallel, subject to the Dirichlet boundary condition that the map is an identity map on the boundary.


The above discussion gives rise to the following theorem about the relationship between the longitudinal deformation and its associated descriptors.

\bigskip

\begin{theorem}
 Let denote the longitudinal deformations by ${\bf f}$. To ${\bf f}$ is associated with a unique descriptor $\mathcal{L}^{\mu}$, given by Equation (13), that satisfies $||\mathcal{L}^{\mu}||_{\infty}<1$. Conversely, given a descriptor $\mathcal{L}^{\mu}$ of a  longitudinal deformation, the corresponding longitudinal deformation ${\bf f}$ can be exactly reconstructed and is unique. In other words, if a longitudinal deformation ${\bf g}$ has a descriptor given by $\mathcal{L}^{\mu}$, then ${\bf f}= {\bf g}$.
\end{theorem}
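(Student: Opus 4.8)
The plan is to prove the two assertions separately and to reduce the longitudinal, multi-frame statement to a single-frame one, since the reconstruction operator $\widetilde{\mathcal{D}}$ of (\ref{LBS}) is block diagonal. For the forward direction there is essentially nothing to choose: the descriptor $\mathcal{L}^{\mu}$ is \emph{defined} frame-by-frame and face-by-face by the explicit formula (13), so it is unique by construction once $\mathbf{f}$ is fixed, and only the bound $\|\mathcal{L}^{\mu}\|_{\infty}<1$ requires an argument. I would rewrite the numerator and denominator of (13) in terms of the Wirtinger derivatives of the affine piece $f_i|_{T}$, using $\partial_{z} f_i|_{T}=\tfrac12\big((a_{T}+d_{T})+i(c_{T}-b_{T})\big)$ and $\partial_{\bar z} f_i|_{T}=\tfrac12\big((a_{T}-d_{T})+i(c_{T}+b_{T})\big)$, so that $\mu_i(T)=\partial_{\bar z} f_i|_{T}/\partial_{z} f_i|_{T}$. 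Because a longitudinal deformation is an orientation-preserving homeomorphism that is affine and non-degenerate on each face, its Jacobian on $T$, equal to $a_{T}d_{T}-b_{T}c_{T}=|\partial_{z}f_i|_{T}|^{2}-|\partial_{\bar z}f_i|_{T}|^{2}$, is strictly positive; this simultaneously shows that the denominator in (13) does not vanish (so $\mu_i(T)$ is well defined) and that $|\mu_i(T)|<1$. Taking the maximum over the finitely many faces and frames gives $\|\mathcal{L}^{\mu}\|_{\infty}<1$.

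For the converse, the goal is to show that the Linear Beltrami Solver $\mathcal{L}^{\mu}\mapsto\mathbf{f}$ of (\ref{LBS}) is well defined and injective. A descriptor ``of a longitudinal deformation'' already satisfies $\|\mathcal{L}^{\mu}\|_{\infty}<1$ by the forward direction, so it suffices to prove that $\widetilde{\mathcal{D}}$ is non-singular under the Dirichlet (identity-on-the-boundary) condition; by block-diagonality this reduces to each block $D_i$. I would identify $D_i$ with the finite-element stiffness matrix of the divergence-form elliptic operator $\nabla\cdot(A_i\nabla\,\cdot\,)$, where $A_i$ is the $2\times2$ symmetric matrix with diagonal entries $\gamma_1,\gamma_3$ and off-diagonal entry $\gamma_2$ (the three functions of $\mu_i$ defined just before (18)). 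The key algebraic fact is that $\det A_i=\gamma_1\gamma_3-\gamma_2^{2}=1$ and $\gamma_1>0$ whenever $\|\mu_i\|_{\infty}<1$, both following by a short computation from $1-\rho_{T}^{2}-\tau_{T}^{2}>0$. Hence $A_i$ is positive definite on every face and, there being finitely many faces, uniformly positive definite; therefore $D_i$ is positive definite once the boundary degrees of freedom are pinned, so it is invertible, the system $\widetilde{\mathcal{D}}\mathbf{f}=\mathbf{b}$ has exactly one solution, and the reconstruction map is well defined.

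It then remains to check \emph{consistency}, that the reconstructed $\mathbf{f}$ genuinely has $\mathcal{L}^{\mu}$ as its descriptor, and to deduce the final uniqueness clause. Consistency is precisely the reversal of the derivation that produced (18) and (20) from (13): equations (18) record the piecewise-linear gradient on each face, while (20) encodes the face-wise Beltrami relation $\partial_{\bar z}f_i|_{T}=\mu_i(T)\,\partial_{z}f_i|_{T}$ (the $\gamma$-substitution is an algebraic equivalence valid because $|\mu_i(T)|<1$), so any solution of the linear system satisfies that relation exactly on each $T$, which by formula (13) means its Beltrami coefficient on $T$ equals $\mu_i(T)$. For the final statement, if a longitudinal deformation $\mathbf{g}$ shares the descriptor $\mathcal{L}^{\mu}$ and the identity boundary condition, then $\mathbf{g}$ solves the very same non-singular system $\widetilde{\mathcal{D}}\mathbf{g}=\mathbf{b}$, so $\mathbf{g}=\mathbf{f}$.

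I expect the main obstacle to be the non-singularity of $\widetilde{\mathcal{D}}$: one must pass from pointwise positive-definiteness of each $A_i$ (the algebraic identity $\det A_i=1$ together with $\gamma_1>0$) to coercivity, hence invertibility, of the assembled stiffness matrix under the prescribed Dirichlet data, being careful that the boundary unknowns are genuinely fixed and that the interior operator is uniformly elliptic over the triangulation. The consistency step is conceptually routine but bookkeeping-heavy, since it requires tracking the real/imaginary splitting in (13) and the $\gamma$-substitutions exactly; I would either carry it out directly or invoke the standard Linear Beltrami Solver construction from computational quasiconformal Teichm\"uller theory cited in the surrounding text.
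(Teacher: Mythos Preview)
Your proposal is correct and follows the same route as the paper, namely the forward direction via the face-wise formula (13) and the converse via the Linear Beltrami Solver system (\ref{LBS}). The paper, however, does not give a separate proof of this theorem at all: it is stated as a summary of the preceding construction, with the non-singularity of $\widetilde{\mathcal{D}}$ simply asserted (``given by the above non-singular linear system'') and attributed to the cited computational Teichm\"uller literature. Your argument is therefore strictly more detailed than the paper's: you supply the orientation-preservation/Jacobian computation for $\|\mathcal{L}^{\mu}\|_{\infty}<1$ explicitly, and you justify invertibility of each block $D_i$ via the uniform ellipticity of the coefficient matrix $A_i$ (the identity $\gamma_1\gamma_3-\gamma_2^{2}=1$ with $\gamma_1>0$), whereas the paper leaves both points implicit. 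The consistency and uniqueness steps you outline are exactly the intended reading of the construction.
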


On the other hand, the bijectivity of the longitudinal deformation can be easily controlled by the norm of its descriptor. It can be explained by the following theorem.

\bigskip

\begin{theorem}\label{thm: bijective}
If $||\mathcal{L}^{\mu}||_{\infty}<1$, then its associated longitudinal deformation is bijective.
\end{theorem}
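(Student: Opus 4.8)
\emph{Proof proposal.} The plan is to reduce the statement to a per-frame assertion and then invoke quasiconformal theory together with a degree argument for the reconstructed piecewise linear maps. First I would note that $\|\mathcal{L}^{\mu}\|_{\infty} < 1$ is, by Definition \ref{def:BCdeformationmatrix}, exactly the statement that $|\mu_i(T)| < 1$ for every face $T \in F$ and every $i = 1, \dots, t$, i.e.\ $\|\mu_i\|_{\infty} < 1$ for each frame. Since the longitudinal deformation $\mathbf{f}$ is the tuple $(f_1, \dots, f_t)$ and is bijective precisely when each $f_i \colon \Omega \to \Omega$ is bijective, it suffices to show that $\|\mu_i\|_{\infty} < 1$ forces $f_i$ to be a bijection of $\Omega$.

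On the continuous side this is almost immediate: by the Measurable Riemann Mapping Theorem (Theorem \ref{thm: Beltrami}), a Beltrami coefficient of sup-norm strictly less than one is realized by a suitably normalized quasiconformal self-homeomorphism, and quasiconformal maps are orientation-preserving homeomorphisms by definition, hence bijective. By the reconstruction result (the theorem immediately preceding this one), $f_i$ is the unique deformation with Beltrami coefficient $\mu_i$ and the prescribed identity boundary values, so $f_i$ coincides with this quasiconformal homeomorphism.

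To make the argument self-contained in the discrete setting actually used by the algorithm — where $f_i$ is the piecewise linear map produced by the Linear Beltrami Solver — I would proceed in two steps. Step one: translate the descriptor bound into positivity of the Jacobian on each triangle. Writing $f_i|_T$ as in Equation (\ref{eq: f on T}) and using Equation (\ref{eq: discrete bc}), a short computation gives
\begin{equation}
1 - |\mu_i(T)|^2 = \frac{4\,(a_T d_T - b_T c_T)}{(a_T + d_T)^2 + (c_T - b_T)^2},
\end{equation}
so that, the denominator being positive wherever $\mu_i(T)$ is defined, $|\mu_i(T)| < 1$ is equivalent to $J_T := a_T d_T - b_T c_T > 0$; thus $f_i$ is orientation-preserving on every face and a local homeomorphism away from the $1$-skeleton. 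Step two: upgrade local orientation-preservation to global bijectivity using the Dirichlet boundary condition. Since $f_i$ restricts to the identity on $\partial\Omega$, it maps $\partial\Omega$ homeomorphically onto $\partial\Omega$, so the Brouwer degree of $f_i$ about any interior point $p$ equals $\deg(\mathrm{id}, \Omega, p) = 1$; because every sheet over a regular value counts with sign $+1$ by orientation-preservation, $p$ has exactly one preimage, and a standard density/limiting argument for simplicial maps then extends injectivity and surjectivity to all of $\Omega$.

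The routine parts are the algebraic identity for $1 - |\mu|^2$ and the reduction to single frames. The main obstacle — or at least the step requiring the most care — is the last one: rigorously ruling out degeneracies of the piecewise linear map along edges and vertices, so that the degree count genuinely produces a homeomorphism rather than merely a map of degree one. I would handle this either by citing the known fact that a simplicial self-map of a topological disk which is the identity on the boundary and has positive Jacobian on each triangle is a homeomorphism, or by a direct argument showing that the images of the triangles tile $\Omega$ without overlap, e.g.\ by induction on faces ordered outward from the boundary, using that each newly attached triangle is glued along edges already covered.
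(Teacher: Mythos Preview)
Your proposal is correct and follows essentially the same skeleton as the paper: reduce to a single frame, show that $|\mu_i(T)|<1$ forces positive Jacobian on every triangle, and then upgrade to global bijectivity. The paper writes the Jacobian step in complex-derivative form, $J_T = |\partial_z f_j|_T|^2(1-|\mu_j(T)|^2)>0$, which is exactly your identity for $1-|\mu|^2$ rewritten; so that part coincides.

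The one genuine difference is the globalization step. The paper argues that positivity of the Jacobian on each face makes $f_j$ locally injective on every one-ring neighborhood and then invokes the Hadamard (global inverse function) theorem to conclude bijectivity. You instead use the identity boundary condition together with a Brouwer degree count to show every interior point has exactly one preimage. Your route has the advantage of explicitly using the Dirichlet boundary data that the reconstruction actually imposes, and it is more transparent in the piecewise-linear setting where the hypotheses of the classical Hadamard theorem need some interpretation; the paper's route is terser but leans on a citation. Both are valid, and your own caveat about handling the $1$-skeleton is exactly the point that needs care in either argument.
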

\begin{proof}
Note that $||\mathcal{L}^{\mu}||_{\infty} = \max_{i,j} \{ |(\mathcal{L}^{\mu})_{ij}|\}$, where $(\mathcal{L}^{\mu})_{ij}$ denotes the $i$-th row and $j$-th column entry of $\mathcal{L}^{\mu}$. Since $||\mathcal{L}^{\mu}||_{\infty}<1$, we have $||\mu_j||_{\infty} < 1$ for all $j=1,2,...,l$. For every triangular face $T$, the restriction map $f_j|_T$ on $T$ is a linear map. The Jacobian $J_T$ of $f_j|_T$ is given by
\[
J_T = |\frac{\partial}{\partial z} (f_j|_T)|^2 - |\frac{\partial}{\partial \bar{z}} (f_j|_T)|^2 = |\frac{\partial}{\partial z} (f_j|_T)|^2 (1-|\mu_j(T)|^2) >0
\]
\noindent since $|\mu_j(T)| = |\frac{\partial f_j|T}{\partial \bar{z}}|/ |\frac{\partial f_j|T}{\partial z}| < 1$ and $|\frac{\partial}{\partial z} (f_j|_T)|>0$ for a well-defined $\mu_j$. we conclude that $f_j|_T$ is orientation-preserving. Thus the piecewise linear deformation $f_j$ is locally injective on every one-ring neighborhood of a vertex. By Hadamard theorem, $f_j$ is globally bijective for all $j=1,2,...,l$. We conclude that the longitudinal deformation associated to $\mathcal{L}^{\mu}$ is bijective.
\end{proof}

\bigskip

In addition, it is important to understand how the difference in two descriptors related to the difference in their corresponding longitudinal deformations.

\bigskip

\begin{theorem}\label{thm: longitudinal deformation}
Let $\mathcal{L}^{\mu}_1$ and $\mathcal{L}^{\mu}_2$ be the desriptors of two longitudinal deformations ${\bf f}$ and ${\bf g}$ respectively. Suppose $||\mathcal{L}^{\mu}_1-\mathcal{L}^{\mu}_2||_F<\epsilon$, where $||\cdot||_F$ denotes the Frobenius norm. Then:
\begin{equation}
    \begin{split}
        ||{\bf f}-{\bf g}||_F & < C_1 \epsilon\\
        ||\mathcal{D}{\bf f}-\mathcal{D}{\bf g}||_F &< C_2 \epsilon
    \end{split}
\end{equation}
\noindent for some positive constants $C_1$ and $C_2$. Here, 
\[\mathcal{D}{\bf f} = \begin{pmatrix}
\mid & \mid & \mid & \mid & & \mid &\mid\\
\mathcal{D}_1 f_1 & \mathcal{D}_2 f_1 & \mathcal{D}_1 f_2 & \mathcal{D}_2 f_2 & \cdots & \mathcal{D}_1 f_t & \mathcal{D}_2 f_t\\
\mid & \mid & \mid & \mid & & \mid &\mid
\end{pmatrix}\in M_{|F|\times 2t} \]

\noindent where $\mathcal{D}_1\varphi = \begin{pmatrix}
\frac{\partial \varphi}{\partial z}(T_1)\\
\vdots\\
\frac{\partial \varphi}{\partial z}(T_{|F|})
\end{pmatrix}\in \mathbb{C}^{|F|}
$ and $\mathcal{D}_2 \varphi = \begin{pmatrix}
\frac{\partial \varphi}{\partial \bar{z}}(T_1)\\
\vdots\\
\frac{\partial \varphi}{\partial \bar{z}}(T_{|F|})
\end{pmatrix}\in \mathbb{C}^{|F|}
$, where $\varphi$ is a piecewise linear map on $\Omega$ and $T_j\in F$ is a triangular face. $\mathcal{D}{\bf g}$ is defined similarly.
\end{theorem}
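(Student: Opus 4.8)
The plan is to reconstruct both deformations through the Linear Beltrami Solver (LBS) and then estimate their difference by a resolvent identity, thereby reducing the statement to a Lipschitz bound for the solver's system matrix. Recall from Equation~(\ref{LBS}) that reconstructing a longitudinal deformation from a descriptor amounts to solving $\widetilde{\mathcal{D}}(\mathcal{L}^{\mu})\,{\bf f}={\bf b}$, where $\widetilde{\mathcal{D}}$ is the block diagonal matrix whose $j$-th block $D_j$ is assembled from the fixed mesh quantities $\alpha_T^i,\beta_T^i$ and from $\gamma_1(T),\gamma_2(T),\gamma_3(T)$, the latter depending only on the $j$-th column $\mu_j$ of $\mathcal{L}^{\mu}$, while ${\bf b}$ encodes the identity boundary condition and so is independent of the descriptor and common to ${\bf f}$ and ${\bf g}$. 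Writing $\widetilde{\mathcal{D}}_1=\widetilde{\mathcal{D}}(\mathcal{L}^{\mu}_1)$ and $\widetilde{\mathcal{D}}_2=\widetilde{\mathcal{D}}(\mathcal{L}^{\mu}_2)$, both non-singular since the LBS system is non-singular, the resolvent identity gives
\[
{\bf f}-{\bf g}=\widetilde{\mathcal{D}}_1^{-1}{\bf b}-\widetilde{\mathcal{D}}_2^{-1}{\bf b}=\widetilde{\mathcal{D}}_1^{-1}\bigl(\widetilde{\mathcal{D}}_2-\widetilde{\mathcal{D}}_1\bigr)\widetilde{\mathcal{D}}_2^{-1}{\bf b},
\]
so that $\|{\bf f}-{\bf g}\|_F\le\|\widetilde{\mathcal{D}}_1^{-1}\|_{\mathrm{op}}\|\widetilde{\mathcal{D}}_2^{-1}\|_{\mathrm{op}}\|{\bf b}\|_F\,\|\widetilde{\mathcal{D}}_1-\widetilde{\mathcal{D}}_2\|_{\mathrm{op}}$; as the first three factors are finite constants, it suffices to bound $\|\widetilde{\mathcal{D}}_1-\widetilde{\mathcal{D}}_2\|_{\mathrm{op}}$ linearly in $\epsilon$.

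I would then establish a Lipschitz bound for the assembly map $\mathcal{L}^{\mu}\mapsto\widetilde{\mathcal{D}}(\mathcal{L}^{\mu})$. Each $\gamma_k(T)$ is a rational function of $\rho_T=\operatorname{Re}\mu_j(T)$ and $\tau_T=\operatorname{Im}\mu_j(T)$ with denominator $1-\rho_T^2-\tau_T^2$, hence smooth away from $|\mu|=1$. Put $\delta:=1-\max\bigl(\|\mathcal{L}^{\mu}_1\|_\infty,\|\mathcal{L}^{\mu}_2\|_\infty\bigr)>0$, positive because $\mathcal{L}^{\mu}_1$ and $\mathcal{L}^{\mu}_2$ are descriptors of genuine deformations and therefore have sup-norm strictly less than $1$. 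Every entry of the segment $\{(1-s)\mathcal{L}^{\mu}_1+s\mathcal{L}^{\mu}_2:s\in[0,1]\}$ lies in the compact disk $\{|\mu|\le 1-\delta\}$ by the triangle inequality, and there each $\gamma_k$ is $C^1$ with gradient bounded by some $L_\delta$; the mean value inequality then yields $|\gamma_k(\mu_j^{(1)}(T))-\gamma_k(\mu_j^{(2)}(T))|\le L_\delta\,|\mu_j^{(1)}(T)-\mu_j^{(2)}(T)|$. Since each entry of $D_j$ is a finite sum, over a one-ring neighbourhood, of such $\gamma_k$'s times fixed mesh constants, this gives $\|D_j(\mu_j^{(1)})-D_j(\mu_j^{(2)})\|_{\mathrm{op}}\le\|D_j(\mu_j^{(1)})-D_j(\mu_j^{(2)})\|_F\le C_\Delta L_\delta\,\|\mu_j^{(1)}-\mu_j^{(2)}\|_2$, with $C_\Delta$ depending only on the triangulation. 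Taking the maximum over the diagonal blocks and using $\max_j\|\mu_j^{(1)}-\mu_j^{(2)}\|_2\le\|\mathcal{L}^{\mu}_1-\mathcal{L}^{\mu}_2\|_F<\epsilon$ gives $\|\widetilde{\mathcal{D}}_1-\widetilde{\mathcal{D}}_2\|_{\mathrm{op}}\le C_\Delta L_\delta\,\epsilon$; feeding this into the estimate of the previous paragraph and absorbing the constants into $C_1$ proves $\|{\bf f}-{\bf g}\|_F<C_1\epsilon$.

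For the derivative bound I would use that the dependence of $\mathcal{D}{\bf f}$ on ${\bf f}$ is linear and descriptor-free. On each face $T$ the entries $\frac{\partial f_j}{\partial z}(T)=\tfrac12\bigl[(a_T+d_T)+i(c_T-b_T)\bigr]$ and $\frac{\partial f_j}{\partial\bar z}(T)=\tfrac12\bigl[(a_T-d_T)+i(c_T+b_T)\bigr]$ are fixed linear combinations of $a_T,b_T,c_T,d_T$, and by the relations expressing $a_T,b_T,c_T,d_T$ in terms of the deformed vertex coordinates (with the mesh-dependent coefficients $\alpha_T^i,\beta_T^i$) these are in turn fixed linear combinations of the values of the coordinate functions of $f_j$ at the vertices of $T$. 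Hence there is a single linear operator $G$, independent of the descriptor, such that $\mathcal{D}_1 f_j$ and $\mathcal{D}_2 f_j$ are obtained by applying $G$ to $f_j$ for every $j$; reading $\mathcal{D}{\bf f}$ and $\mathcal{D}{\bf g}$ column by column then gives $\|\mathcal{D}{\bf f}-\mathcal{D}{\bf g}\|_F\le\|G\|_{\mathrm{op}}\|{\bf f}-{\bf g}\|_F<\|G\|_{\mathrm{op}}C_1\,\epsilon=:C_2\epsilon$. The single ingredient that is not a routine estimate is the non-singularity, hence the bounded inverse, of the LBS matrix $\widetilde{\mathcal{D}}$ in the first paragraph; this we borrow from the computational quasiconformal Teichm\"{u}ller theory already used to derive Equation~(\ref{LBS}), after which the argument is just the resolvent identity combined with Lipschitz estimates for rational functions on a compact subset of the unit disk.
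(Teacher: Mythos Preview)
Your argument is correct, and it is genuinely different from the paper's. The paper does not go through the LBS system at all: it extends each $f_j,g_j$ to normalized quasiconformal maps on $\mathbb{C}$, invokes an analytic estimate from continuous Beltrami theory of the form $\|\partial_z f_j-\partial_z g_j\|_2,\ \|\partial_{\bar z} f_j-\partial_{\bar z} g_j\|_2\le C(k,\alpha)\|\mu_j-\nu_j\|_q$ to get the derivative inequality first, and then recovers the map inequality from a discrete Cauchy--Pompeiu representation $f_j=\mathbf{v}+\mathcal{S}\,\mathcal{D}_2 f_j$ together with boundedness of $\mathcal{S}$. So the logical order is reversed relative to yours (derivatives first, then maps), and the engine is classical quasiconformal analysis rather than finite-dimensional perturbation theory.

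What each approach buys: your route is more elementary and stays entirely inside the discrete framework actually used in the paper---you only need smoothness of the rational functions $\gamma_k$ on the closed disk $\{|\mu|\le 1-\delta\}$, non-singularity of the LBS matrix, and the resolvent identity, all of which are self-contained. Your constants, however, are expressed through $\|\widetilde{\mathcal D}_1^{-1}\|_{\mathrm{op}}$, $\|\widetilde{\mathcal D}_2^{-1}\|_{\mathrm{op}}$, $L_\delta$, $C_\Delta$, and $\|G\|_{\mathrm{op}}$, so their dependence on the mesh and on $\delta$ is implicit. The paper's approach imports heavier machinery but yields constants $C(k,\alpha)$ tied directly to the maximal dilatation $k$, which is the natural quasiconformal parameter; on the other hand, its passage between the piecewise-linear maps on $\Omega$ and normalized quasiconformal maps on $\mathbb{C}$ is handled rather informally. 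Both proofs legitimately allow $C_1,C_2$ to depend on $\mathcal{L}^\mu_1,\mathcal{L}^\mu_2$, as the theorem only asserts existence of \emph{some} constants.
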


\smallskip

\begin{proof}
Denote ${\bf f} = \begin{pmatrix}
\mid & \mid & & \mid\\
f_1 & f_2 & \cdots & f_t\\
\mid & \mid & & \mid
\end{pmatrix}$, ${\bf g} = \begin{pmatrix}
\mid & \mid & & \mid\\
g_1 & g_2 & \cdots & g_t\\
\mid & \mid & & \mid
\end{pmatrix}$, $\mathcal{L}_1^{\mu} = \begin{pmatrix}
\mid & \mid & & \mid\\
\mu_1 & \mu_2 & \cdots & \mu_t\\
\mid & \mid & & \mid
\end{pmatrix}$ and $\mathcal{L}_2^{\mu} = \begin{pmatrix}
\mid & \mid & & \mid\\
\nu_1 & \nu_2 & \cdots & \nu_t\\
\mid & \mid & & \mid
\end{pmatrix}$. For each $j$, $f_j$ and $g_j$ can be extended to $\mathbb{C}$, by letting $f_j$ and $g_j$ be the identity map outside the image domain $\Omega$. Without loss of generality, we can assume $f_j$ and $g_j$ are normalized quasiconformal maps associated to $\mu_j$ and $\nu_j$ respectively. If $\alpha>1$ and $0<p\leq 1$ satisfy $2< 2\alpha < 1+\frac{1}{k}$, then there exist a positive integer $C(k,\alpha)$ such that
\[
||\mathcal{D}_1 f_j - \mathcal{D}_1 g_j||_2 \leq C(k,\alpha) ||\mu_j -\nu_j||_q \ \text{ and }\  ||\mathcal{D}_2 f_j - \mathcal{D}_2 g_j||_2 \leq C(k,\alpha) ||\mu_j -\nu_j||_q.\] 

\noindent where $q = \frac{p\alpha}{\alpha -1}$. Note that all matrix norms are equivalent. There exists a positive constant $A$ such that $||\cdot ||_q \leq A||\cdot ||_2$. Hence, 
\[
\begin{split}
||\mathcal{D} {\bf f} - \mathcal{D} {\bf g}||_F & = \left(\sum_{j=1}^l ||\mathcal{D}_1 f_j - \mathcal{D}_1 g_j||_2^2 + ||\mathcal{D}_2 f_j - \mathcal{D}_2 g_j||_2^2\right)^{1/2}\\
& \leq \left(\sum_{j=1}^l \frac{2C(k,\alpha)}{A^2}||\mu_j - \mathcal{D}_1 \nu_j||_2^2 \right)^{1/2}\\
& = \sqrt{2}A C(k,\alpha)||\mathcal{L}^{\mu}_1-\mathcal{L}^{\mu}_2||_F <\sqrt{2} A C(k,\alpha)\epsilon
\end{split}
\] 
The second inequality follows by letting $C_1 =\sqrt{2} A C(k,\alpha)$. 

For the first inequality, note that $f_j$ and $g_j$ are both normalized quasiconformal map for $j=1,2,...,l$. Then,
\[
f_j = {\bf v} + \mathcal{S}\mathcal{D}_2 f_j\ \  \text{ and }\ \  g_j = {\bf v} + \mathcal{S}\mathcal{D}_2 g_j
\]
\noindent where ${\bf v} = \begin{pmatrix} v_1\\v_2\\ \vdots \\ v_n
\end{pmatrix} \in \mathbb{C}^n
$ is the position vector of all vertices of $\Omega$. $\mathcal{S} \in M_{n\times |F|}(\mathbb{C})$ is defined in such a way that for any ${\bf h}\in \mathbb{C}^{|F|}$, $(\mathcal{S}{\bf h})_k = \sum_{m=1}^{|F|} w_{km} ({\bf h})_k$, where $w_{km} = \frac{1}{\pi} \int_{T_m} \frac{1}{v_k - \tau} d\tau$ and $T_m$ is the $m$-th triangular face of $\Omega$. Thus, we have
\[
\begin{split}
    ||f_j- g_j || &= ||\mathcal{S}(\mathcal{D}_2 f_j) - \mathcal{S}(\mathcal{D}_2 g_j)||_2\\
    & \leq ||\mathcal{S}||_2 ||\mathcal{D}_2 f_j - \mathcal{D}_2 g_j||_2\\
    & \leq ||\mathcal{S}||_2 C(k,\alpha)||\mu_j -\nu_j||_q.
\end{split}
\]
\noindent We can conclude that $||{\bf f} - {\bf g}||_F \leq A||\mathcal{S}||_2 C(k,\alpha) ||\mathcal{L}^{\mu}_1-\mathcal{L}^{\mu}_2||_F <A||\mathcal{S}||_2 C(k,\alpha)\epsilon$. The first inequality follows by letting $C_1 = A||\mathcal{S}||_2 C(k,\alpha)$.
\end{proof}

\bigskip

Theorem \ref{thm: longitudinal deformation} states that two longitudinal deformations are close if their Beltrami descriptors are close to each others. Furthermore, their degrees of smoothness are similar if their Beltrami descriptors are close. In other words, the longitudinal deformation is stable under the pertubation of the descriptor. It is a crucial observation, so that the manipulation of longitudinal deformations through Beltrami descriptors is not sensitive to the error of the descriptors. On the other hand, to alleviate the issue of large storage requirement, $\mathcal{L}^{\mathcal{F}}$ can be used to replace $\mathcal{L}^{\mu}$. Theorem \ref{thm: longitudinal deformation} tells us the reconstruction error of the longitudinal deformation is small if $\mathcal{L}^{\mu}$ and $\mathcal{L}^{\mathcal{F}}$ are close to each others.

\subsection{Decomposition of normal and abnormal components.}

In this subsection, we will first explain why the decomposed low-rank and sparse parts would have the meaning as desired and how we can decompose a longitudinal deformation into normal and abnormal components. 

To decompose a series of deformations into the normal and abnormal components, we propose to apply the Low-Rank Sparse Matrix Pursuit. Since the normal deformation is defined as the periodic motion displayed on the footage, it should correspond to the low rank part of the matrix capturing the series of deformations. The repeating pattern over a period of duration should be captured by the low rank component. However, when abnormal perturbations occur, the rank will be affected. In particular, the rank of the deformation matrix increases. The abnormal component (or perturbations) should correspond to the sparse part of the deformation matrix. In order to extract the abnormal component from the normal component, decomposing the deformation matrix into low rank and sparse components is a natural strategy.

As a remark, the representation of deformations using the Beltrami descriptor is advantageous. Theoretically, each deformation can be represented by its Beltrami coefficient. Conversely, given a Beltrami coefficient, the associated deformation can be reconstructed by solving the Beltrami's equation. The Beltrami coefficient effectively measures the geometric distortion under the associated deformation. In particular, the associated deformation is guaranteed to be bijective when the magnitude of the Beltrami coefficient is strictly less than 1 everywhere. Thus, representing the deformation matrix using Beltrami coefficients is more robust to the numerical error incurred during the process of low rank and sparse decomposition. On the contrary, if the deformation matrix is represented by vector fields, the bijectivity has to be controlled by the Jacobian constraint, which is hard to enforce. As such, the low rank and sparse decomposition of the deformation matrix represented by vector fields usually yield unnatural deformations with self-overlaps. Nevertheless, the low rank and sparse decomposition of the Beltrami descriptors can effectively decompose the deformations into normal (periodic)  and abnormal (perturbations) motions without self-overlaps, which will be discussed later on.

Given a deformation descriptor $\mathcal{L}^{\mu}$, we assume $\mathcal{L}^{\mu}$ is composed of the normal deformation $\mathcal{N}$ and abnormal deformation $\mathcal{A}$. Normal deformation $\mathcal{N}$ is often characterized by repeating pattern. Mathematically, $\mathcal{N}$ can be regarded as periodic and hence it should be of low rank. On the other hand, the abnormal deformation often occurs at some particular region and time. Thus, $\mathcal{A}$ can be assumed to be sparse. As such, our problem can be formulated as finding $\mathcal{N}$ and $\mathcal{A}$ such that they minimize:
\begin{equation}\label{eq: complex relaxed rpca}
\min\limits_{\mathcal{N},\mathcal{A}} \| \mathcal{N} \|_{\ast} + \alpha \|\mathcal{A}\|_{1}  \text{,  subject to } \mathcal{L}^{\mu}=\mathcal{N}+\mathcal{A} \in \mathbb{C}^{mn \times t}
\end{equation}

The first term involves the nuclear norm, aiming to minimize the rank of $\mathcal{N}$. The second term aims to enhance the sparsity of $\mathcal{A}$. The optimization problem can be solved using the alternating minimization method with multiplier (ADMM) as in the real case with suitable modifications. We will describe it in details as follows.

The Augmented Lagrangian function can be written as
\begin{equation}\label{eq: complex aug lagrangian}
E(\mathcal{N}^{k},\mathcal{A}^{k},Z^{k};\mathcal{L}^{\mu}) = \|\mathcal{N}^{k}\|_{\ast} + \alpha \|\mathcal{A}^{k}\|_{1} - \langle Z^{k}, \mathcal{N}^{k}+\mathcal{A}^{k}-\mathcal{L}^{\mu} \rangle + \frac{\beta_{k}(N)}{2}\|\mathcal{N}^{k}+\mathcal{A}^{k}-\mathcal{L}^{\mu}\|^{2}_{2}
\end{equation}
with $\langle X,Y \rangle = \mathrm{real}(\mathrm{tr}(X^{*}Y))= \mathrm{real}(\mathrm{tr}(XY^{*}))$ and $\beta_{k}(N) = \min \left\{ (1.5)^{k}\frac{1.25}{\|\mathcal{L}^{\mu}\|_{2}}, (1.5)^{N}\frac{1.25}{\|\mathcal{L}^{\mu}\|_{2}}  \right\}$. This $\beta_{k}(N)$ is defined in this way to ensure the recovered abnormal deformation to be bijective. Details will be provided in later section. ADMM to solve the optimizaton can be written as the following iterative scheme:
\begin{equation}\label{eq: iter scheme}
\begin{cases}
\mathcal{N}^{k+1} = \argmin_{\mathcal{N} \in \mathbb{C}^{mn \times t}} E(\mathcal{N},\mathcal{A}^k,Z^k;\mathcal{L}^{\mu})\ \ \ (\mathcal{N}\text{-subproblem}) \\
\mathcal{A}^{k+1} = \argmin_{\mathcal{A} \in \mathbb{C}^{mn \times t}} E(\mathcal{N}^{k+1},\mathcal{A},Z^k;\mathcal{L}^{\mu})  \ \ (\mathcal{A}\text{-subproblem})\\
Z^{k+1} = Z^{k} - \beta_{k}(N)(\mathcal{N}^{k+1}+\mathcal{A}^{k+1}-\mathcal{L}^{\mu}) \\
\end{cases}
\end{equation}

We will now describe how each subproblems can be tackled. We begin by looking into the $\mathcal{A}$-subproblem. Some definitions are needed to help our explanation.

\bigskip

\begin{definition}\label{def: row norm}
For $A \in \mathbb{C}^{M \times N}$, define the norm
    \begin{equation}\label{eq: row norm}
        \|A\|_{1,2} = \sum_{i=1}^{M} \left( \sum_{j=1}^{N} |a_{ij}|^{2} \right)^{\frac{1}{2}}
    \end{equation}
\end{definition}

\smallskip

It can be easily seen that Equation (\ref{eq: row norm}) sums each row's $L^2$ norm, and it clearly defines a matrix norm as well. Now, the $\mathcal{A}$-subproblem can be solved via a modified Euclidean projection, as described in the following proposition.

\bigskip

\begin{lemma}\label{thm: sparse matrix}
Define $f: \mathbb{R}^{N \times 2} \to \mathbb{R}$ by
\begin{equation}\label{eq: sparse objective function}
    f(X) = \alpha \|X\|_{1,2} + \frac{1}{2}\|M - X \|_{2}^{2} 
\end{equation}
where $M$ is a matrix in $\mathbb{R}^{N \times 2}$. Then the minimiser $X^{\ast}$ of $f$ is given by
\begin{equation}\label{eq: sparse matrix eq}
    X^{\ast}_{j} = \left( 1 - \frac{\alpha}{|M_{j}|} \right)_{+}M_{j}
\end{equation}
where $X^{\ast}_{j}$ is the $j$-th row of $X^{\ast}$ and $M_{j}$ is the $j$-th row of $M$. $|M_{j}|$ is the usual $L_{2}$ vector norm and $( y )_{+} = \max \{ y, 0 \}$ for $y \in \mathbb{R}$.
\end{lemma}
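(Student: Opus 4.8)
The plan is to minimize $f(X) = \alpha\|X\|_{1,2} + \tfrac12\|M-X\|_2^2$ row-by-row, exploiting the fact that both terms in $f$ decouple over the rows of $X$. Writing $X_j$ and $M_j$ for the $j$-th rows, we have $\|X\|_{1,2} = \sum_j |X_j|$ and $\|M-X\|_2^2 = \sum_j |M_j - X_j|^2$, so $f(X) = \sum_j \bigl(\alpha|X_j| + \tfrac12|M_j-X_j|^2\bigr)$. Hence it suffices to minimize, for each fixed $j$, the function $g(x) = \alpha|x| + \tfrac12|M_j - x|^2$ over $x \in \mathbb{R}^2$; the global minimizer of $f$ is obtained by stacking the row-wise minimizers. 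This reduces the matrix problem to the classical vector soft-thresholding (group-lasso / block-soft-threshold) problem.

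Next I would solve the vector subproblem $\min_{x\in\mathbb{R}^2} g(x)$. The function $g$ is strictly convex (the quadratic term is strictly convex and $|x|$ is convex), so it has a unique minimizer characterized by $0 \in \partial g(x)$. For $x \neq 0$, $\partial g(x) = \{\alpha \tfrac{x}{|x|} + (x - M_j)\}$, giving the stationarity condition $x - M_j + \alpha \tfrac{x}{|x|} = 0$, i.e. $x\bigl(1 + \tfrac{\alpha}{|x|}\bigr) = M_j$. This forces $x$ to be a positive multiple of $M_j$; substituting $x = c M_j$ with $c>0$ and solving for $c$ yields $c = 1 - \tfrac{\alpha}{|M_j|}$, valid precisely when $|M_j| > \alpha$, giving $x^\ast = \bigl(1 - \tfrac{\alpha}{|M_j|}\bigr)M_j$. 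For the case $x = 0$: $\partial g(0) = \{-M_j + \alpha v : |v|\le 1\}$, so $0 \in \partial g(0)$ iff $|M_j| \le \alpha$. Collecting both cases gives exactly $x^\ast = \bigl(1 - \tfrac{\alpha}{|M_j|}\bigr)_+ M_j$, which matches Equation (\ref{eq: sparse matrix eq}).

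I do not expect a serious obstacle here: the argument is a standard subdifferential computation, and the only point requiring a little care is the non-differentiability of $|x|$ at the origin, which is handled cleanly by the subdifferential formulation rather than by naive calculus. An alternative, fully elementary route that avoids subdifferentials would be: first show the minimizer must lie on the ray $\{tM_j : t \ge 0\}$ (for any $x$, replacing it by its projection onto this ray decreases both $\alpha|x|$ and $\tfrac12|M_j-x|^2$, or leaves them unchanged), then reduce to the one-dimensional problem $\min_{t\ge 0} \alpha t|M_j| + \tfrac12(1-t)^2|M_j|^2$ (when $M_j\neq 0$; the case $M_j=0$ is trivial since then $x^\ast=0$), whose solution is $t = (1 - \alpha/|M_j|)_+$ by elementary one-variable calculus. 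Either way, strict convexity guarantees the minimizer is unique, so the stated formula is the minimizer.
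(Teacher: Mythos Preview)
Your proposal is correct and follows essentially the same approach as the paper: decouple the objective row-by-row, then characterize each row minimizer via the subdifferential optimality condition $0\in\partial g(x)$, splitting into the cases $x\neq 0$ and $x=0$. The paper phrases the nonzero case as $X^\ast_j = M_j - \hat P_{\mathbb{D}^\alpha_\infty}(M_j)$ (projection onto the closed disk of radius $\alpha$), which is exactly your formula $\bigl(1-\alpha/|M_j|\bigr)M_j$ when $|M_j|>\alpha$, so the two arguments coincide.
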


\begin{proof}
Minimising Equation (\ref{eq: sparse objective function}) is equivalent to minimising each row of $X$. In particular, we must have 
\begin{equation}
    0 \in \partial \left( \alpha |X^{\ast}_{j}|_{2} + \frac{1}{2}|M_{j} - X^{\ast}_{j}|^{2}_{2} \right) \ \ \text{ for } j =1,2,...,N.
\end{equation}
For $\|X^{\ast}_{j}\|_{2}\neq 0$, we have

\begin{equation}\label{eq: shrink x row}
    X^{\ast}_{j} = M_{j} - \hat{P}_{\mathbb{D}^{\alpha}_{\infty}}(M_{j}) 
\end{equation}
Equation (\ref{eq: shrink x row}) indicates that $X^{\ast}_{j}$ is obtained by reducing the magnitude of $M_{j}$ by $\alpha$ while keeping the same direction. If $\|M^{\ast}_{j}\|_{2}=0$, then by calculating the subdifferential of Equation (\ref{eq: sparse objective function}), we get:
\begin{equation}\label{eq: sparse optimisation x row zero}
    0 \in \alpha \{ x + \frac{1}{\alpha}M_{j} \ | \ x \in \partial(|X^{\ast}_{j}|_{2}) \}
\end{equation}

Hence, $ 0 \in \alpha \{ g + \frac{1}{\alpha} M_{j} \ | \ \|g\|_{2} \leq 1 \}$. This implies $|M_{j}| \leq \alpha$. Iterating over each row, we arrive at Equation (\ref{eq: sparse matrix eq}).
\end{proof}
\bigskip

\begin{theorem}\label{prop: sparse}
For $\mathcal{N},\mathcal{A},Z,\mathcal{L}^{\mu} \in \mathbb{C}^{(mn) \times t}$, the solution to the $\mathcal{A}$-subproblem is
\begin{equation}\label{eq: complex update S}
\mathcal{A}^{k+1} = \frac{1}{\beta_{k}(N)}Z^{k} - \mathcal{N}^{k} + \mathcal{L}^{\mu} - \hat{P}_{\mathbb{D}^{\alpha / \beta_{k}(N)}_{\infty}}\left( \frac{1}{\beta} Z^{k} - \mathcal{N}^{k} + \mathcal{L}^{\mu} \right)
\end{equation}
where $\hat{P}_{\mathbb{D}^{\alpha / \beta_{k}(N)}_{\infty}}$ denotes the Euclidean projection onto $\mathbb{D}^{\alpha / \beta_{k}(N)}_{\infty} := \{ z \in \mathbb{C} \ | \ |z| \leq \alpha / \beta_{k}(N) \}$.
\end{theorem}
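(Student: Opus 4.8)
The plan is to reduce the $\mathcal{A}$-subproblem to the row-wise shrinkage already handled in Lemma~\ref{thm: sparse matrix}. First I would drop from $E(\mathcal{N}^{k+1},\mathcal{A},Z^{k};\mathcal{L}^{\mu})$ every term independent of $\mathcal{A}$, which leaves the minimization of
\begin{equation*}
g(\mathcal{A}) = \alpha\|\mathcal{A}\|_{1} - \langle Z^{k},\mathcal{A}\rangle + \frac{\beta_{k}(N)}{2}\bigl\|\mathcal{N}^{k+1}+\mathcal{A}-\mathcal{L}^{\mu}\bigr\|_{2}^{2}.
\end{equation*}
Since $\langle X,Y\rangle = \mathrm{real}(\mathrm{tr}(X^{*}Y))$ is a genuine real inner product on $\mathbb{C}^{mn\times t}$ (regarded as $\mathbb{R}^{2mnt}$) and $\|\cdot\|_{2}$ is the norm it induces, the last two terms can be collected by completing the square: writing
\begin{equation*}
W := \frac{1}{\beta_{k}(N)}Z^{k} - \mathcal{N}^{k+1} + \mathcal{L}^{\mu},
\end{equation*}
one obtains $g(\mathcal{A}) = \alpha\|\mathcal{A}\|_{1} + \frac{\beta_{k}(N)}{2}\|\mathcal{A}-W\|_{2}^{2} + \mathrm{const}$, and dividing through by $\beta_{k}(N)$ (which does not move the minimizer) reduces the problem to minimizing $\frac{\alpha}{\beta_{k}(N)}\|\mathcal{A}\|_{1} + \frac12\|\mathcal{A}-W\|_{2}^{2}$.

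Next I would recognize this as an instance of Lemma~\ref{thm: sparse matrix}. Identify $\mathbb{C}^{mn\times t}$ with $\mathbb{R}^{(mnt)\times 2}$ by sending each complex entry $a=x+iy$ to the row $(x,y)$ and stacking the $mnt$ resulting rows. Under this identification the complex $\ell_{1}$ matrix norm $\|\mathcal{A}\|_{1}=\sum_{i,j}|a_{ij}|$ becomes exactly the norm $\|\cdot\|_{1,2}$ of Definition~\ref{def: row norm}, while the Frobenius distance $\|\mathcal{A}-W\|_{2}$ is preserved. Applying Lemma~\ref{thm: sparse matrix} with $\alpha$ replaced by $\alpha/\beta_{k}(N)$ and $M$ the real representation of $W$ gives the minimizer row-wise, i.e.\ entrywise in the original complex picture:
\begin{equation*}
\mathcal{A}^{k+1}_{ij} = \left(1 - \frac{\alpha/\beta_{k}(N)}{|W_{ij}|}\right)_{+} W_{ij}.
\end{equation*}

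Finally I would rewrite this entrywise shrinkage in terms of the disk projection appearing in the statement. For a fixed $w\in\mathbb{C}$ and radius $\rho>0$, the Euclidean projection $\hat{P}_{\mathbb{D}^{\rho}_{\infty}}(w)$ equals $w$ when $|w|\le\rho$ and $\frac{\rho}{|w|}w$ otherwise, so in both cases $w-\hat{P}_{\mathbb{D}^{\rho}_{\infty}}(w) = (1-\rho/|w|)_{+}w$; taking $\rho=\alpha/\beta_{k}(N)$ and applying this entrywise turns the previous display into Equation~(\ref{eq: complex update S}) (up to the typographical slips $\mathcal{N}^{k}$ for $\mathcal{N}^{k+1}$ and $\beta$ for $\beta_{k}(N)$ there). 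I do not anticipate a genuine obstacle: the only points needing care are confirming that completing the square is legitimate — which rests on $\langle\cdot,\cdot\rangle$ being the real inner product inducing $\|\cdot\|_{2}$ rather than a Hermitian form — and checking that the complex $\ell_{1}$ norm truly decouples over entries and matches $\|\cdot\|_{1,2}$ under real vectorization, so that Lemma~\ref{thm: sparse matrix} transfers verbatim with the rescaled threshold. As an alternative to invoking the lemma, one can argue directly: $g$ is separable over entries, each scalar subproblem $\min_{a\in\mathbb{C}}\frac{\alpha}{\beta_{k}(N)}|a| + \frac12|a-w|^{2}$ is convex, and its optimality condition $0\in\frac{\alpha}{\beta_{k}(N)}\partial|a| + (a-w)$ yields the same shrinkage after separating the cases $a=0$ and $a\ne 0$ — precisely the computation carried out in the proof of Lemma~\ref{thm: sparse matrix}.
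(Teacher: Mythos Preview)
Your proposal is correct and follows essentially the same route as the paper: complete the square to isolate $\frac{\alpha}{\beta_{k}(N)}\|\mathcal{A}\|_{1}+\frac12\|\mathcal{A}-W\|_{2}^{2}$, pass to the real $(mnt)\times 2$ representation so that the complex $\ell_{1}$ norm becomes $\|\cdot\|_{1,2}$, invoke Lemma~\ref{thm: sparse matrix}, and translate the resulting row-wise shrinkage back into the disk-projection form. Your explicit verification that $w-\hat{P}_{\mathbb{D}^{\rho}_{\infty}}(w)=(1-\rho/|w|)_{+}w$ and your remark on the $\mathcal{N}^{k}$ versus $\mathcal{N}^{k+1}$ slip are both on point.
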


\smallskip

\begin{proof}
To find the minimizer for the $\mathcal{A}$-subproblem, it is equivalent to solving
\begin{equation}\label{eq: alternating sparse}
\begin{aligned}
\mathcal{A}^{k+1} 
&= \argmin_{\mathcal{A}} \alpha \| \mathcal{A}\|_{1} + \frac{\beta_{k}(N)}{2}\|\mathcal{N}^{k}+\mathcal{A}-\mathcal{L}^{\mu}-\frac{1}{\beta_{k}(N)}Z^{k}\|^{2}_{2}\\
\end{aligned}
\end{equation}
Let $\varphi: \mathbb{C}^{(mn) \times t} \to \mathbb{R}^{(mnt) \times 2}$ be the transformation defined by:
\begin{equation}\label{eq: vectorise complex matrix isomorphism}
   \varphi (X) = \begin{pmatrix}
   \mathrm{Re}(\mathrm{vec}(X)), \mathrm{Im}(\mathrm{vec}(X))
   \end{pmatrix}
\end{equation}
where $\mathrm{vec}: \mathbb{C}^{p \times q} \to \mathbb{C}^{pq}$ is the function stacking columns of the input matrix into one vector. According to \cite{chan_2012}, Equation (\ref{eq: alternating sparse}) is indeed equivalent to 
\begin{equation}\label{eq: alternating sparse equivalent}
\mathcal{A}^* = \argmin_{\mathcal{A}} \frac{\alpha}{\beta_{k}(N)} \| \varphi(\mathcal{A}) \|_{1,2} + \frac{1}{2} \|\varphi(\mathcal{N}^{k}) + \varphi(\mathcal{A}) - \varphi(\mathcal{L}^{\mu}) - \frac{1}{\beta_{k}(N)} \varphi(Z^{k}) \|_{F}^{2}
\end{equation}
where $\| X \|_{1,2} = \sum_{j=1}^{n} \|X_{j}\|_{2}$ with $X_{j}$ denotes the $j$-th row of $X$. 

Putting $M = \frac{1}{\beta_{k}(N)}Z^{k} - \mathcal{N}^{k} + \mathcal{L}^{\mu}$, according to Lemma \ref{thm: sparse matrix}, it follows that with $\mathcal{A}^{\ast}_{j}$ and $M_{j}$ denoting the $j$-th row of $\mathcal{A}$ and $M$ respectively,
\begin{equation}
\mathcal{A}^*_{j} = \left( 1 - \frac{\alpha}{\beta_{k}(N)}\frac{1}{|M_{j}|} \right)_{+}M_{j}
\end{equation}
If $|M_{j}| \leq \frac{\alpha}{\beta_{k}(N)}$, $\mathcal{A}^*_{j} = 0$. If $|M_{j}| > \frac{\alpha}{\beta_{k}(N)}$, 
\begin{equation}
    \begin{split}
        \mathcal{A}^{\ast}_{j} & = \left( 1 - \frac{\alpha}{\beta_{k}(N)}\frac{1}{|M_{j}|} \right) M_{j}\\
        & = M_{j} - \hat{P}_{\mathbb{D}^{\alpha / \beta_{k}(N)}_{\infty}}(M_{j})
    \end{split}
\end{equation}
Then Formula (\ref{eq: complex update S}) follows.
\end{proof}

\bigskip

Theorem (\ref{eq: alternating sparse}) is important as it gives us a closed form solution to solve the $\mathcal{A}$-subproblem during the ADMM iteration.

Next, we will look at the $\mathcal{N}$-subproblem. Indeed, the $\mathcal{N}$-subproblem can be treated exactly as in the real case, which is described as follows.

\smallskip

\begin{theorem}\label{corollary: low rank}
For $\mathcal{N},\mathcal{A},Z,\mathcal{L}^{\mu} \in \mathbb{C}^{mn \times t}$, the solution to the low-rank subproblem in Equation (\ref{eq: update L}) is
\begin{equation}\label{eq: complex update L}
   \mathcal{N}^{k+1} = U^{k+1}\mathrm{diag} \left( \max\{ \sigma_{i}^{k+1} - \frac{1}{\beta_{k}(N)},0 \} \right) (V^{k+1})^{T}
\end{equation}
where $U^{k+1},V^{k+1},\sigma_{i}^{k+1}$ are obtained by SVD that is:
\begin{equation}\label{eq: svd update 2}
   \mathcal{L}^{\mu} - \mathcal{A}^{k+1} + \frac{1}{\beta_{k}(N)} Z^{k} = U^{k+1}\Sigma^{k+1}(V^{k+1})^{T} \quad \text{with} \quad \Sigma^{k+1} = \mathrm{diag} \left( \{ \sigma_{i}^{k+1} \}_{i=1}^{r} \right) 
\end{equation}
\end{theorem}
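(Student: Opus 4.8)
The plan is to recognize the $\mathcal{N}$-subproblem as the evaluation of the proximal operator of the nuclear norm at a fixed complex matrix, and then to confirm the claimed singular value soft-thresholding formula by checking the first-order optimality condition through the subdifferential of $\|\cdot\|_{\ast}$.

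\textbf{Reduction to a prox problem.} In $E(\mathcal{N},\mathcal{A}^{k+1},Z^{k};\mathcal{L}^{\mu})$ the term $\alpha\|\mathcal{A}^{k+1}\|_{1}$ and the parts of $\langle Z^{k},\cdot\rangle$ and $\|\cdot\|_2^2$ not involving $\mathcal{N}$ are constants, so minimizing over $\mathcal{N}$ amounts to minimizing $\|\mathcal{N}\|_{\ast} - \langle Z^{k},\mathcal{N}\rangle + \frac{\beta_{k}(N)}{2}\|\mathcal{N}+\mathcal{A}^{k+1}-\mathcal{L}^{\mu}\|_2^2$. Completing the square in the last two terms rewrites the subproblem as
\begin{equation*}
\mathcal{N}^{k+1} = \argmin_{\mathcal{N}\in\mathbb{C}^{mn\times t}} \|\mathcal{N}\|_{\ast} + \frac{\beta_{k}(N)}{2}\bigl\|\mathcal{N} - Y^{k}\bigr\|_2^2, \qquad Y^{k} := \mathcal{L}^{\mu} - \mathcal{A}^{k+1} + \tfrac{1}{\beta_{k}(N)}Z^{k}.
\end{equation*}
Since $\beta_{k}(N)>0$ this objective is strictly convex and coercive, so its minimizer exists, is unique, and is characterized by the inclusion $\beta_{k}(N)\bigl(Y^{k} - \mathcal{N}^{k+1}\bigr) \in \partial\|\mathcal{N}^{k+1}\|_{\ast}$.

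\textbf{Verifying the closed form.} Take the SVD $Y^{k} = U^{k+1}\Sigma^{k+1}(V^{k+1})^{\ast}$ with $\Sigma^{k+1} = \mathrm{diag}(\sigma_i^{k+1})$, put $\tau := 1/\beta_{k}(N)$, and set $\mathcal{N}^{k+1} := U^{k+1}\mathrm{diag}\bigl((\sigma_i^{k+1}-\tau)_{+}\bigr)(V^{k+1})^{\ast}$. Splitting the indices into $I_{+} = \{i : \sigma_i^{k+1}>\tau\}$ and its complement and writing $U_1,V_1$ and $U_2,V_2$ for the corresponding blocks of singular vectors, one computes $Y^{k} - \mathcal{N}^{k+1} = U^{k+1}\mathrm{diag}\bigl(\min\{\sigma_i^{k+1},\tau\}\bigr)(V^{k+1})^{\ast}$, hence
\begin{equation*}
\beta_{k}(N)\bigl(Y^{k} - \mathcal{N}^{k+1}\bigr) = U_1 V_1^{\ast} + W, \qquad W := \beta_{k}(N)\, U_2\,\mathrm{diag}\bigl(\sigma_i^{k+1}\bigr)_{i\notin I_{+}}\,V_2^{\ast}.
\end{equation*}
Here $U_1 V_1^{\ast}$ is precisely the sign term of the compact SVD of $\mathcal{N}^{k+1}$, while $W$ satisfies $U_1^{\ast}W = 0$ and $WV_1 = 0$ by orthogonality of the singular subspaces and $\|W\|_2 = \beta_{k}(N)\max_{i\notin I_{+}}\sigma_i^{k+1}\le \beta_{k}(N)\tau = 1$. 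Invoking the characterization $\partial\|X\|_{\ast} = \{U_1 V_1^{\ast} + W : U_1^{\ast}W = 0,\ WV_1 = 0,\ \|W\|_2\le 1\}$ valid for $X = U_1\mathrm{diag}(\text{positive})V_1^{\ast}$, this exhibits $\beta_{k}(N)(Y^{k}-\mathcal{N}^{k+1})$ as an element of $\partial\|\mathcal{N}^{k+1}\|_{\ast}$; by uniqueness, $\mathcal{N}^{k+1}$ is the minimizer, which is exactly Equation (\ref{eq: complex update L}) with the SVD (\ref{eq: svd update 2}).

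\textbf{Expected main obstacle.} The one genuinely delicate point is that the soft-thresholding calculus used above — the subdifferential formula for $\|\cdot\|_{\ast}$ and, underlying it, von Neumann's trace inequality $\mathrm{real}\langle A,B\rangle\le\sum_i\sigma_i(A)\sigma_i(B)$ — must be invoked for \emph{complex} matrices rather than the real ones of the classical references \cite{cai_2008,ma_2009}. I expect to dispatch this either by citing the complex version directly, or by the real embedding $A = B+iC\mapsto\bigl(\begin{smallmatrix}B & -C\\ C & B\end{smallmatrix}\bigr)$, which doubles the multiplicity of each singular value and therefore preserves the structure of the nuclear-norm prox, reducing the complex claim to the already-established real one. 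A minor bookkeeping remark is that the transpose in (\ref{eq: complex update L})--(\ref{eq: svd update 2}) should be read as the conjugate transpose, so that $Y^{k} = U^{k+1}\Sigma^{k+1}(V^{k+1})^{\ast}$ is a bona fide SVD.
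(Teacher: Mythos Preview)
Your proposal is correct and in fact more complete than the paper's own treatment: the paper does not give a proof at all, stating only that ``the proof of the above theorem follows similarly as in the case of real-valued matrices'' and referring the reader to \cite{ma_2009,cai_2008}. Your argument --- reducing to the nuclear-norm prox and verifying optimality via the Watson subdifferential characterization --- is precisely the standard SVT argument from those references, and your explicit handling of the complex case (via the real $2\times 2$ block embedding or a direct complex von Neumann inequality) addresses a point the paper simply glosses over; your remark that $(V^{k+1})^{T}$ should be read as $(V^{k+1})^{\ast}$ is also well taken.
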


The proof of the above theorem follows similarly as in the case of real-valued matrices. We refer readers to \cite{ma_2009,cai_2008} for the details of the proof.

It is worth mentioning that different literature has provided theoretical guarantee that the ADMM approach on this RPCA will converge. Readers can refer to \cite{hong_2012,yuan_2009,gabay_1976,gabay_1983,glowinski_2013,glowinsku_1989,he_1998,ye_2007}. In particular, Hong et. al.\cite{hong_2012} proved that the approach has linear time convergence.  

We summarize the algorithm for the decomposition of $L^{\mu}$ into normal and abnormal deformations as follows. \par
\begin{algorithm}[H]\label{algo: rpca}
\SetAlgoLined
\SetKwInOut{Input}{Input}
\SetKwInOut{Output}{Output}
\SetKwInOut{Initialisation}{Initialisation}
\Input{$\mathcal{L}^{\mu} \in \mathbb{C}^{mn \times t}$, $N\in \mathbb{N}$}
\Output{Normal component $\mathcal{N}$ and abnormal component $\mathcal{A}$}
\Initialisation{$\mathcal{N}_{0}$ be a zero matrix, $Z_{0} = \mathcal{L}^{\mu} / \|\mathcal{L}^{\mu}\|_{2}$, $\beta_{k}(N) = \min \{ (1.5)^{k} \frac{1.25}{\| \mathcal{L}^{\mu} \|_{2}}, (1.5)^{N} \frac{1.25}{\| \mathcal{L}^{\mu} \|_{2}}\}$}
\While{not converge}{
Update $\mathcal{N}^{k+1}$ using Equation (\ref{eq: complex update L}) \;
Update $\mathcal{A}^{k+1}$ using Equation (\ref{eq: complex update S}) \;
$Z_{k+1} \gets Z_{k} + \beta_{k}(n)(\mathcal{L}^{\mu}-\mathcal{N}^{k+1}-\mathcal{A}^{k+1})$
}
\caption{Decomposition of $\mathcal{L}^{\mu}$}
\end{algorithm}

\bigskip

Here, $N$ is a chosen integer parameter.
Once $\mathcal{L}^{\mu}$ is decomposed into $\mathcal{N}$ and $\mathcal{A}$, the associated normal and abnormal longitudinal deformations can be reconstructed according to Equation (\ref{LBS}). \par

The subtle perturbation from a longitudinal deformation are supposedly bijective without overlaps. A crucial question is whether our extracted abnormal deformation is indeed bijective. As a matter of fact, performing the low rank and sparse decomposition on the Beltrami descriptor is beneficial, since we can theoretically guarantee the bijectivity of the extracted abnormal deformation under suitable choice of the parameter. Hence, our algorithm can give a realistic and accurate extracted component for further deformation analysis. This fact is explained in details with the following theorem.

\begin{theorem}\label{thm: bijectivity of decomp}
Considering Equation (\ref{eq: complex aug lagrangian}), there exists a constant $c(\mathcal{L}^{\mu})$ such that if 
\begin{equation}\label{eq: alpha condition}
   \alpha > c(\mathcal{L}^{\mu}) = \frac{\| \mathcal{L}^{\mu} \|_{\max}}{\| \mathcal{L}^{\mu} \|_{2}} + \frac{1.25p}{\| \mathcal{L}^{\mu} \|_{2}}\frac{1 - (1.5)^{N}q^{N}}{1-1.5q} + \frac{\beta_{N}(N)pq^{N}}{1-q}
\end{equation}
where $p,q$ depend on $\mathcal{L}^{\mu}$ and $\| \mathcal{L}^{\mu} \|_{\max} < 1$, then our algorithm \ref{algo: rpca} would yield $\| \mathcal{A}^{k} \|_{\max} < 1$ for all $k \in \mathbb{N}$.
\end{theorem}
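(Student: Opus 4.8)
The plan is to exploit the explicit forms of the two sub-problem solvers to track the $\|\cdot\|_{\max}$ of the iterates through the ADMM loop, and then to close an induction whose accumulated error is precisely the geometric series appearing in $c(\mathcal{L}^{\mu})$. First I would record the two structural facts that drive everything. By Theorem~\ref{prop: sparse} the $\mathcal{A}$-update is an entrywise complex soft-threshold at level $\alpha/\beta_k(N)$ applied to $M^{k+1}:=\frac{1}{\beta_k(N)}Z^{k}-\mathcal{N}^{k}+\mathcal{L}^{\mu}$, so that
\[
\|\mathcal{A}^{k+1}\|_{\max}=\left(\|M^{k+1}\|_{\max}-\tfrac{\alpha}{\beta_k(N)}\right)_{+};
\]
hence it suffices to bound $\|M^{k+1}\|_{\max}$, and a larger $\alpha$ only helps. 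By Theorem~\ref{corollary: low rank} the $\mathcal{N}$-update is a singular-value soft-threshold at level $1/\beta_k(N)$, so the residual $R^{k+1}:=\bigl(\mathcal{L}^{\mu}-\mathcal{A}^{k+1}+\frac{1}{\beta_k(N)}Z^{k}\bigr)-\mathcal{N}^{k+1}$ satisfies $\|R^{k+1}\|_{2}\le 1/\beta_k(N)$, and therefore $\|R^{k+1}\|_{\max}\le\|R^{k+1}\|_{2}\le 1/\beta_k(N)$, since the largest-magnitude entry of a matrix is bounded by its spectral norm.

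Next I would feed these identities into the multiplier update. Writing $\mathcal{N}^{k+1}=\mathcal{L}^{\mu}-\mathcal{A}^{k+1}+\frac{1}{\beta_k(N)}Z^{k}-R^{k+1}$ and substituting into $Z^{k+1}=Z^{k}+\beta_k(N)(\mathcal{L}^{\mu}-\mathcal{N}^{k+1}-\mathcal{A}^{k+1})$ collapses the update to $Z^{k+1}=\beta_k(N)\,R^{k+1}$, so that $\|Z^{k+1}\|_{\max}\le\|Z^{k+1}\|_{2}\le 1$ for every $k\ge 0$ (the discrete analogue of dual feasibility $\|Z^{*}\|_{2}\le 1$ in RPCA). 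Substituting this back into the definition of $M^{k+1}$ and using $\mathcal{N}^{k}=\mathcal{L}^{\mu}-\mathcal{A}^{k}+\frac{1}{\beta_{k-1}(N)}Z^{k-1}-R^{k}$ from the previous step yields the clean recursion
\[
M^{k+1}=\mathcal{A}^{k}+(1+\rho_k)R^{k}-\rho_{k-1}R^{k-1},\qquad \rho_j:=\frac{\beta_{j-1}(N)}{\beta_{j}(N)}\in\{\tfrac{2}{3},1\},
\]
i.e.\ the input of the $(k{+}1)$-st soft-threshold is $\mathcal{A}^{k}$ perturbed only by the two most recent low-rank residuals.

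I would then run an induction on $k$. Combining the displays above gives
\[
\|\mathcal{A}^{k+1}\|_{\max}\le\|\mathcal{A}^{k}\|_{\max}+(1+\rho_k)\|R^{k}\|_{\max}+\rho_{k-1}\|R^{k-1}\|_{\max}-\tfrac{\alpha}{\beta_k(N)},
\]
together with explicit base cases $k=1,2$ in which $\mathcal{N}^{0}=0$ and $Z^{0}=\mathcal{L}^{\mu}/\|\mathcal{L}^{\mu}\|_{2}$ produce the leading term $\|\mathcal{L}^{\mu}\|_{\max}/\|\mathcal{L}^{\mu}\|_{2}$ of $c(\mathcal{L}^{\mu})$. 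Unrolling and summing, the residual contributions accumulate to $p\sum_{j\ge 0}\beta_{j}(N)q^{j}$, where $q<1$ is the per-step contraction ratio for the sequence $\{R^{j}\}$ — governed by $\rho_j$ in the growth phase $j\le N$ and by the linear convergence rate of the ADMM iteration (guaranteed by the convergence results cited after Theorem~\ref{corollary: low rank}) in the plateau phase $j>N$ — and $p$ the corresponding prefactor; both depend only on $\mathcal{L}^{\mu}$. Since $\beta_{j}(N)=(1.5)^{j}\,\frac{1.25}{\|\mathcal{L}^{\mu}\|_{2}}$ for $j\le N$ and $\beta_{j}(N)=\beta_{N}(N)$ for $j>N$, this sum splits exactly as $\frac{1.25p}{\|\mathcal{L}^{\mu}\|_{2}}\cdot\frac{1-(1.5q)^{N}}{1-1.5q}+\frac{\beta_{N}(N)pq^{N}}{1-q}$, which are precisely the second and third terms of $c(\mathcal{L}^{\mu})$. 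Hence $\alpha>c(\mathcal{L}^{\mu})$, together with $\|\mathcal{L}^{\mu}\|_{\max}<1$ absorbing the base offset, forces $\|\mathcal{A}^{k+1}\|_{\max}<1$ and closes the induction; Theorem~\ref{thm: bijective} then upgrades this to bijectivity of the reconstructed abnormal deformation.

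The main obstacle is the self-referential coupling: $M^{k+1}$ is controlled by $Z^{k}$ and $\mathcal{N}^{k}$, which in turn depend on the entire history of $\mathcal{A}$-updates (hence on $\alpha$), so one cannot bound $\|M^{k+1}\|_{\max}$ in isolation and must carry $\|\mathcal{A}^{k}\|_{\max}$ and $\|Z^{k}\|_{\max}$ (equivalently the residual magnitudes $\|R^{k}\|_{\max}$) through a simultaneous induction. The delicate point inside this is the accumulation estimate: because $\beta_{k}(N)$ is non-decreasing and eventually constant, the crude bound $\|R^{k}\|_{\max}\le 1/\beta_{k}(N)$ is summable only over the growth phase, so one genuinely needs a geometric decay of the residuals in the plateau phase — this is exactly where the linear-convergence rate of the scheme enters and is the technical heart of the proof.
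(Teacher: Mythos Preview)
Your structural identities are correct and rather elegant: in the $\mathcal{A}$-first ordering the multiplier really does collapse to $Z^{k}=\beta_{k-1}(N)R^{k}$ with $\|R^{k}\|_{2}\le 1/\beta_{k-1}(N)$, and the two-term recursion $M^{k+1}=\mathcal{A}^{k}+(1+\rho_{k})R^{k}-\rho_{k-1}R^{k-1}$ follows. But this is \emph{not} the route the paper takes, and the place where your argument breaks is precisely the step in which you try to force it back onto the paper's constant $c(\mathcal{L}^{\mu})$.

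The paper does something much more direct. It never introduces the SVT residuals $R^{k}$ at all; it simply unrolls the multiplier update as
\[
Z^{k}=Z^{0}+\sum_{i}\beta_{i}(N)\bigl(\mathcal{L}^{\mu}-\mathcal{N}^{i}-\mathcal{A}^{i}\bigr),
\]
writes $M^{k+1}=\frac{1}{\beta_{k}(N)}\bigl[Z^{0}+\sum_{i=1}^{k}\beta_{i}(N)(\mathcal{L}^{\mu}-\mathcal{N}^{i}-\mathcal{A}^{i})\bigr]+\mathcal{A}^{k}$, and then applies the R-linear convergence bound $\|\mathcal{L}^{\mu}-\mathcal{N}^{i}-\mathcal{A}^{i}\|\le pq^{i}$ from Hong et~al.\ termwise. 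The bracketed quantity is then bounded by $\frac{\|\mathcal{L}^{\mu}\|_{\max}}{\|\mathcal{L}^{\mu}\|_{2}}+\sum_{i}\beta_{i}(N)pq^{i}$, and splitting this sum at $i=N$ is exactly what produces the two geometric pieces of $c(\mathcal{L}^{\mu})$. The induction hypothesis $\|\mathcal{A}^{k}\|_{\max}<1$ handles the remaining term, and one concludes $\|M^{k+1}\|_{\max}<\alpha/\beta_{k}(N)+1$.

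Your recursion cannot reproduce this sum. Once you have collapsed $Z^{k}$ to $\beta_{k-1}(N)R^{k}$, the weights $\beta_{i}(N)$ attached to the \emph{earlier} primal residuals have been telescoped away; only the two most recent SVT residuals survive in $M^{k+1}$. Consequently, ``unrolling and summing'' your one-step inequality yields terms of the form $\sum_{j}C_{j}\|R^{j}\|_{\max}$ with $C_{j}=O(1)$, not $\sum_{j}\beta_{j}(N)\|R^{j}\|_{\max}$; and bounding $\|R^{j}\|_{\max}$ either by $1/\beta_{j-1}(N)$ or (via linear convergence of the primal residual, which is what Hong et~al.\ actually control, not $R^{j}$ itself) by $pq^{j}$ still does not manufacture the factor $\beta_{j}(N)$ that appears in $c(\mathcal{L}^{\mu})$. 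Your claimed accumulation $p\sum_{j}\beta_{j}(N)q^{j}$ is therefore unjustified. What your approach \emph{does} naturally deliver is a one-step condition of the type $\alpha>\beta_{k}(N)\bigl[(1+\rho_{k})\|R^{k}\|_{\max}+\rho_{k-1}\|R^{k-1}\|_{\max}\bigr]$, which after inserting $\|R^{j}\|_{\max}\le 1/\beta_{j-1}(N)$ gives a uniform bound like $\alpha>1+2/\rho_{k}\le 4$ --- a perfectly valid (indeed simpler) sufficient condition, but a different theorem from the one stated.
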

\begin{proof}
The proof is based on induction on $k$. We first check the base case
\begin{equation}\label{eq: MI base case}
\begin{aligned}
    \| \mathcal{A}^{1} \|_{\max} &= \left\| \frac{1}{\beta_{0}}Z^{0} + \mathcal{L}^{\mu} - \hat{P}_{\mathbb{D}^{\alpha / \beta_{0}}_{\infty}} \left( \frac{1}{\beta_{0}}Z^{0} + \mathcal{L}^{\mu} \right) \right\|_{\max} < 1 \\
    \iff \left\| \frac{1}{\beta_{0}}Z^{0} + \mathcal{L}^{\mu} \right\|_{\max} &< 1 + \frac{\alpha}{\beta_{0}}
\end{aligned}
\end{equation}
Clearly, $\| \mathcal{L}^{\mu} \|_{\max} < 1$ and since $Z^{0}$ is defined to be $\mathcal{L}^{\mu} / \| \mathcal{L}^{\mu} \|_{2}$, we proved the base case. \par
Assume it is true that $\| \mathcal{A}^{k} \|_{\max} < 1$. From \cite{hong_2012} by Hong et. al., we have, for some constant $p > 0$, $q \in (0,1)$
\begin{equation}\label{eq: R linearity}
    \| \mathcal{L}^{\mu} - \mathcal{N}^{r} - \mathcal{A}^{r} \| \leq pq^{r}
\end{equation}
which is known as the R-linearity of convergence of ADMM. Note that using Equation (\ref{eq: complex update S}) in Formula (\ref{eq: MI base case})
\begin{equation}\label{eq: k+1 case equivalence}
    \| \mathcal{A}^{k+1} \|_{\max} < 1 \iff \left\|\frac{1}{\beta_{k}(N)}Z^{k} - \mathcal{N}^{k} + \mathcal{L}^{\mu} \right\|_{\max} < 1 + \frac{\alpha}{\beta_{k}(N)}
\end{equation}
Considering this specific term, we deduce that
\begin{equation}\label{eq: induction process}
\begin{aligned}
     &\left\|\frac{1}{\beta_{k}(N)}Z^{k} - \mathcal{N}^{k} + \mathcal{L}^{\mu} \right\|_{\max} \\
     &= \left\|  \frac{1}{\beta_{k}(N)} \left( \frac{\mathcal{L}^{\mu}}{\| \mathcal{L} \|_{2}} + \sum_{i=1}^{k-1} \beta_{i}(N) \left( \mathcal{L}^{\mu} - \mathcal{N}^{i} - \mathcal{A}^{i} \right) \right) - \frac{\beta_{k}(N)}{\beta_{k}(N)}\left( \mathcal{L}^{\mu} - \mathcal{N}^{k} - \mathcal{A}^{i} \right) + \mathcal{A}^{k} \right\|_{\max} \\
     &\leq \frac{1}{\beta_{k}(N)}\left\| \frac{\mathcal{L}^{\mu}}{\| \mathcal{L} \|_{2}} + \sum_{i=1}^{k}  \beta_{i}(N)(\mathcal{L}^{\mu} - \mathcal{N}^{i} - \mathcal{A}^{i})\right\|_{\max} +  \|\mathcal{A}^{k}\|_{\max} \\
\end{aligned}
\end{equation}
Notice that using Equation (\ref{eq: R linearity}), we have
\begin{equation}\label{eq: beta term}
\begin{aligned}
    &\left\| \frac{\mathcal{L}^{\mu}}{\| \mathcal{L} \|_{2}} + \sum_{i=1}^{k}  \beta_{i}(N)(\mathcal{L}^{\mu} - \mathcal{N}^{i} - \mathcal{A}^{i})\right\|_{\max} \\
    &= \frac{\| \mathcal{L}^{\mu} \|_{\max}}{\| \mathcal{L}^{\mu} \|_{2}} + \sum_{i=1}^{n-1} \beta_{i}(N)\| \mathcal{L}^{\mu} - \mathcal{N}^{i}- \mathcal{A}^{i}\|_{\max} + \sum_{i=n}^{k} \beta_{i}(N)\|\mathcal{L}^{\mu} - \mathcal{N}^{i}- \mathcal{A}^{i}\|_{\max} \\
    &\leq \frac{\| \mathcal{L}^{\mu} \|_{\max}}{\| \mathcal{L}^{\mu} \|_{2}} + \sum_{i=1}^{n-1} \beta_{i}(N)pq^{i} + \sum_{i=n}^{k} \beta_{i}(N)pq^{i} \\
    &= \frac{\| \mathcal{L}^{\mu} \|_{\max}}{\| \mathcal{L}^{\mu} \|_{2}} + \frac{1.25p}{\| \mathcal{L}^{\mu} \|_{2}}\frac{1 - (1.5)^{N}q^{N}}{1-1.5q} + \frac{\beta_{N}(N)pq^{N}}{1-q} \\
    &< \alpha
\end{aligned}
\end{equation}
Thus, putting the last Inequality (\ref{eq: beta term}) into Equation (\ref{eq: induction process}), we have
\begin{equation}\label{eq: MI conclusion}
    \left\| \frac{1}{\beta_{k}(N)}Z^{k} - \mathcal{N}^{k} + \mathcal{L}^{\mu} \right\|_{\max} < 
\frac{\alpha}{\beta_{k}(N)} + 1
\end{equation}
which implies that $\| \mathcal{A}^{k+1} \|_{\max} < 1$, which completes the inductive proof. 
\end{proof}

With all tools introduced, we can now describe our whole algorithm as follows. \par
\begin{algorithm}[H]\label{algo: main}
\SetAlgoLined
\SetKwInOut{Input}{Input}
\SetKwInOut{Output}{Output}
\Input{Reference frame $I_{ref}$, and video frame $\{I_{i}\}_{i=1}^{t}$}
\Output{Low-rank frames $\{\mathbf{l}_{i}\}_{i=1}^{t}$ and sparse frames $\{\mathbf{s}_{i}\}_{i=1}^{t}$}
\For{each frame $I_{i}$(parallel-computation compatible)}{
Register $I_{ref}$ to $I_{i}$ get the deformation field\;
Compute the Beltrami descriptor $\mathcal{L}^{\mu}$\;
}
Using algorithm \ref{algo: rpca}, decompose $\mathcal{L}^{\mu}=\mathcal{N}+\mathcal{A}$\;
\For{each column $l_{i}$ of $\mathcal{N}$(parallel-computation compatible)}{
Using LBS, recover $l_{i}$ to a map $f^{\mathcal{N}}_{i}$ \;
Deform $I_{ref}$ with the map $f^{\mathcal{N}}_{i}$ and we obtain $\mathbf{l}_{i}$ \;
}
\For{each column $s_{i}$ of $\mathcal{A}$(parallel-computation compatible)}{
Using LBS, recover $s_{i}$ to a map $f^{\mathcal{A}}_{i}$ \;
Deform $I_{ref}$ with the map $f^{\mathcal{A}}_{i}$ and we obtain $\mathbf{s}_{i}$\;
}
\caption{Abnormal Deformation Extraction and Recovery Algorithm}
\end{algorithm}

\bigskip

Finally, we remark that the robustness actually depends on the accuracy of the registration. For a large deformation between the reference frame and the image frame, some registration methods may not yield accurate registration result. As a result, the decomposition of the series of deformations may become inaccurate. Nevertheless, if the deformation across all frames are not very large, we can achieve similar performance whichever frame is chosen as the reference frame. For the case with large deformations across frames, we usually choose a frame, which is least deviated from every frame, as the reference image. With this choice, our method can still achieve very accurate decomposition effectively.


\section{Experimental Result}\label{sec: experiment}

In this section, we present our experimental results on synthetic images, as well as on real medical images.\par

\bigskip

\noindent {\bf Example 1:} We first test our proposed method on a synthetic image sequence. The input data is a sequence of binary images that shows a circle shrinks and expands, and repeats this process for a few cycles. Readers can refer to Figure \ref{fig: circle cycle} to visualise this process. The total number of frame of this process is 48, which means that the ground-truth rank of the Fourier Transformed BC matrix is 24. 
\begin{figure}[h]
\centering
\includegraphics[scale=.75]{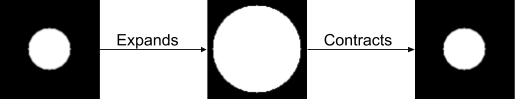}
\caption{Illustration of a Normal Cycle of the First Synthetic Experiment}
\label{fig: circle cycle}
\end{figure}
The whole expansion and contraction process is repeated 9 times, and 3 of which are perturbed by adding some deformations around the boundary of the circle. After adding perturbation on the cycles, the rank of the Beltrami descriptor matrix raises to 47, while our algorithm successfully reduces the rank of the low-rank matrix to 27. We remark that since we took the smallest circle as the reference image, one can observe that the recovered sparse image has a circle that is far smaller than perturbed frames. Table \ref{table: circle pics} shows the result of our algorithm on one of the three perturbations. \par

\begin{table}[t]
\centering
\begin{tabular}{|l|l|l|l|l|l|}
\hline
\begin{tabular}[c]{@{}l@{}}Original \\ Frame \end{tabular} & \begin{tabular}[c]{@{}l@{}}Perturbed \\ Frame \end{tabular} & 
\begin{tabular}[c]{@{}l@{}}Recovered \\ Low-Rank \\ Frame \end{tabular}&\begin{tabular}[c]{@{}l@{}}Recovered \\ Sparse\\ Frame \end{tabular} &
\begin{tabular}[c]{@{}l@{}}Recovered \\ Low-Rank \\ Frame \\ on Vector \\ Field \end{tabular}&\begin{tabular}[c]{@{}l@{}}Recovered \\ Sparse\\ Frame \\ on Vector \\ Field\end{tabular} \\ \hline
\includegraphics[scale=.5]{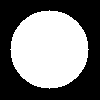}& \includegraphics[scale=.5]{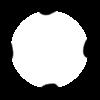}&\includegraphics[scale=.5]{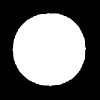}&\includegraphics[scale=.5]{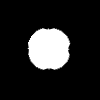}&
\includegraphics[scale=.5]{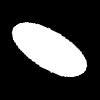}&\includegraphics[scale=.5]{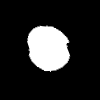}\\ \hline
\includegraphics[scale=.5]{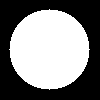}& \includegraphics[scale=.5]{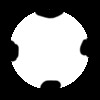}&\includegraphics[scale=.5]{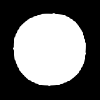}&\includegraphics[scale=.5]{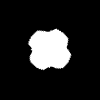}&
\includegraphics[scale=.5]{fig/results/circle/vec_without_fft/low_rank/306.png}&\includegraphics[scale=.5]{fig/results/circle/vec_without_fft/sparse/306.png}\\ \hline
\includegraphics[scale=.5]{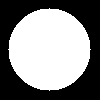}& \includegraphics[scale=.5]{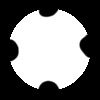}&\includegraphics[scale=.5]{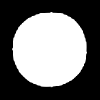}&\includegraphics[scale=.5]{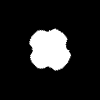}&
\includegraphics[scale=.5]{fig/results/circle/vec_without_fft/low_rank/306.png}&\includegraphics[scale=.5]{fig/results/circle/vec_without_fft/sparse/306.png}\\ \hline
\includegraphics[scale=.5]{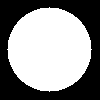}& \includegraphics[scale=.5]{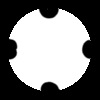}&\includegraphics[scale=.5]{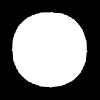}&\includegraphics[scale=.5]{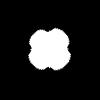}&
\includegraphics[scale=.5]{fig/results/circle/vec_without_fft/low_rank/306.png}&\includegraphics[scale=.5]{fig/results/circle/vec_without_fft/sparse/306.png}\\ \hline
\includegraphics[scale=.5]{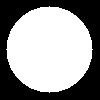}& \includegraphics[scale=.5]{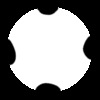}&\includegraphics[scale=.5]{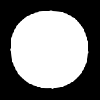}&\includegraphics[scale=.5]{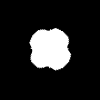}&
\includegraphics[scale=.5]{fig/results/circle/vec_without_fft/low_rank/306.png}&\includegraphics[scale=.5]{fig/results/circle/vec_without_fft/sparse/306.png}\\ \hline
\includegraphics[scale=.5]{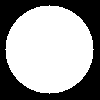}& \includegraphics[scale=.5]{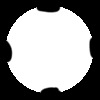}&\includegraphics[scale=.5]{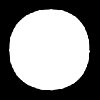}&\includegraphics[scale=.5]{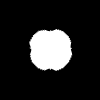}&
\includegraphics[scale=.5]{fig/results/circle/vec_without_fft/low_rank/306.png}&\includegraphics[scale=.5]{fig/results/circle/vec_without_fft/sparse/306.png}\\ \hline
\includegraphics[scale=.5]{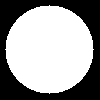}& \includegraphics[scale=.5]{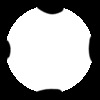}&\includegraphics[scale=.5]{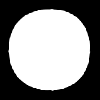}&\includegraphics[scale=.5]{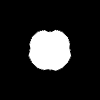}&
\includegraphics[scale=.5]{fig/results/circle/vec_without_fft/low_rank/306.png}&\includegraphics[scale=.5]{fig/results/circle/vec_without_fft/sparse/306.png}\\ \hline
\end{tabular}
\caption{Results of Example 1}
\label{table: circle pics}
\end{table}

A straightforward method to decompose the longitudinal deformation is done by applying the RPCA method on the vector fields of the deformation. As mentioned, vector fields cannot effectively capture the geometric information of the deformation. As such, RPCA method on vector fields cannot yield satisfactory results. The last two columns of Table \ref{table: circle pics} show the results of pursuing the low-rank and sparse part on the deformation vector fields obtained from registering the reference frame to each of the video frames. We view each vector in the vector fields as an element in $\mathbb{C}$, and we stacked them horizontally and obtain a giant matrix, with each column corresponds to the deformation vector field from the referenced frame to each frame defined at each pixel. Then we run the complex matrix decomposition algorithm on this matrix. Although the decomposed low-rank matrix is of rank 24, the last two columns of Table \ref{table: circle pics} clearly shows that the recovered results are far from the ground-truth to be useful: The circles are distorted to ellipses, which they should not be. Compared to the results obtained from our original longitudinal deformation descriptor, this decomposition is not meaningful.\par

\begin{table}[t]
\centering
\begin{tabular}{|l|l|l|l|}
\hline
\begin{tabular}[c]{@{}l@{}} Original \\ Frame \end{tabular} & \begin{tabular}[c]{@{}l@{}}Perturbed \\ Frame \end{tabular}& 
\begin{tabular}[c]{@{}l@{}}Recovered \\ Low-Rank \\ Frame \end{tabular}&\begin{tabular}[c]{@{}l@{}}Recovered \\ Sparse\\ Frame \end{tabular} \\ \hline
\includegraphics[scale=.75]{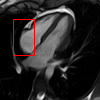}& \includegraphics[scale=.75]{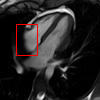}&\includegraphics[scale=.75]{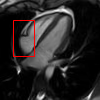}&\includegraphics[scale=.75]{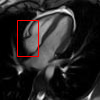}\\ \hline
\includegraphics[scale=.75]{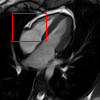}& \includegraphics[scale=.75]{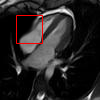}&\includegraphics[scale=.75]{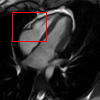}&\includegraphics[scale=.75]{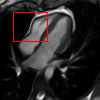}\\ \hline
\includegraphics[scale=.75]{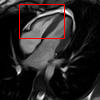}& \includegraphics[scale=.75]{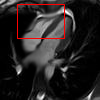}&\includegraphics[scale=.75]{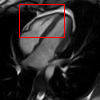}&\includegraphics[scale=.75]{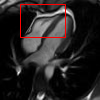}\\ \hline
\includegraphics[scale=.75]{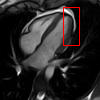}& \includegraphics[scale=.75]{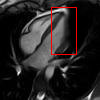}&\includegraphics[scale=.75]{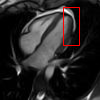}&\includegraphics[scale=.75]{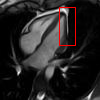}\\ \hline
\end{tabular}
\caption{Results of Example 2}
\label{table: first heart pics}
\end{table}

\bigskip

\noindent {\bf Example 2:} The next example is on a sequence of real medical images of a beating heart. The original video contains 341 frames with repeated periodic beating of 31 times. In this example, artificial abnormal deformations are introduced to one of the 31 cycles, and so ground-truth images are available to study the accuracy of our proposed model. Table \ref{table: first heart pics} shows the result. The second column shows the frames with manual deformation added on images in the first column, and the red box area is where deformation is added. We can see that our algorithm can almost recover the low-rank frames to the original frames and the sparse frames to the perturbed frames. The size of the input Beltrami descriptor is $19602 \times 341$, and the rank of the original video and perturbed video are 11 and 15 respectively. After running our algorithm on the matrix, the rank of the recovered low-rank matrix is reduced to 11. \par
Beside the recovered rank, from Table \ref{table: first heart pics}, we can see that our algorithm can capture and recover both the normal and abnormal deformation on the beating heart to great details. It can be seen that the recovered low-rank frames looks very much alike to the original frames and recovered sparse frames can effective capture the abnormal deformation. \par

\begin{figure}
    \begin{subfigure}{.5\textwidth}
    \centering
    \includegraphics[width=.6\linewidth]{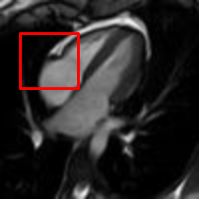}
    \caption{Ground Truth Frame}
    \label{subfig: original} 
    \end{subfigure}
    \begin{subfigure}{.5\textwidth}
    \centering
    \includegraphics[width=.6\linewidth]{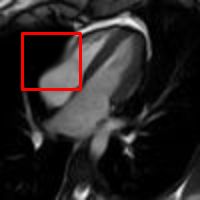}
    \caption{Perturbed Frame}
    \label{subfig: perturbed} 
    \end{subfigure}
    
    \begin{subfigure}{.5\textwidth}
    \centering
    \includegraphics[width=.6\linewidth]{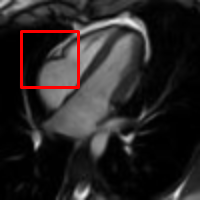}
    \caption{Recovered Low-rank Frame}
    \label{subfig: low rank} 
    \end{subfigure}
    \begin{subfigure}{.5\textwidth}
    \centering
    \includegraphics[width=.6\linewidth]{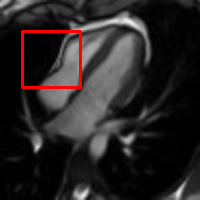}
    \caption{Recovered Sparse Frame}
    \label{subfig: sparse} 
    \end{subfigure}
    \caption{Second Row of Table \ref{table: first heart pics}}
    \label{fig: heart 1 zoom in pics}
\end{figure}
To better observe the result of our experiment, Figure \ref{fig: heart 1 zoom in pics} shows the second row of Table \ref{table: first heart pics}. The area bounded by the red box is the periphery of the beating heart. Figure \ref{subfig: original} shows the ground truth frame. Figure \ref{subfig: perturbed} shows a frame with a perturbation combined with the normal deformation. The deformation between a frame and the reference image is computed by registration and represented by the Beltrami coefficient. Figure \ref{subfig: low rank} shows the deformed image from the reference frame by the low rank part of the deformation. It closely resemble the ground truth frame as shown in Figure \ref{subfig: original}. Figure \ref{subfig: sparse} shows the deformed image from the reference frame by the sparse part of the deformation. It demonstrates how the abnormal motion deforms the image from the reference frame. Thus, Figure \ref{subfig: sparse} should be different from Figure \ref{subfig: perturbed}, since Figure \ref{subfig: perturbed} combines both the normal and abnormal motions.

\begin{figure}
    \begin{subfigure}{.5\textwidth}
    \centering
    \includegraphics[scale = .4]{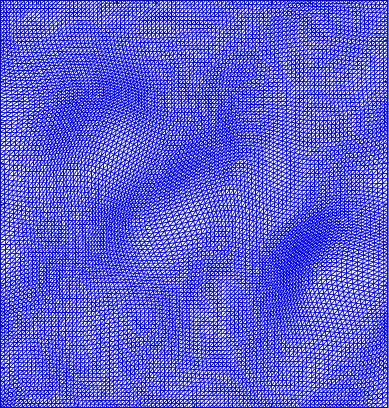}
    \caption{GT Low-Rank Deformation}
    \label{subfig: grid gt low rank}
    \end{subfigure}
    \begin{subfigure}{.5\textwidth}
    \centering
    \includegraphics[scale = .4]{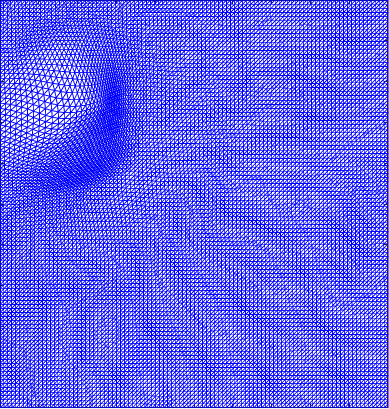}
    \caption{GT Sparse Deformation}
    \label{subfig: grid gt sparse}
    \end{subfigure}
    
    \begin{subfigure}{.5\textwidth}
    \centering
    \includegraphics[scale = .4]{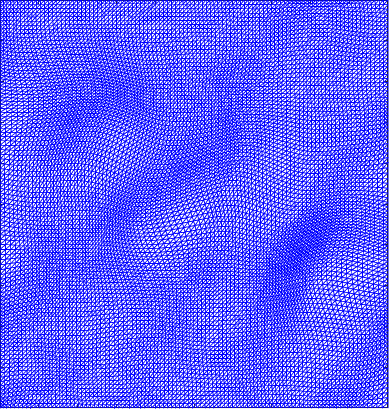}
    \caption{Recovered Low-Rank Deformation}
    \label{subfig: grid recovered low rank}
    \end{subfigure}
    \begin{subfigure}{.5\textwidth}
    \centering
    \includegraphics[scale = .4]{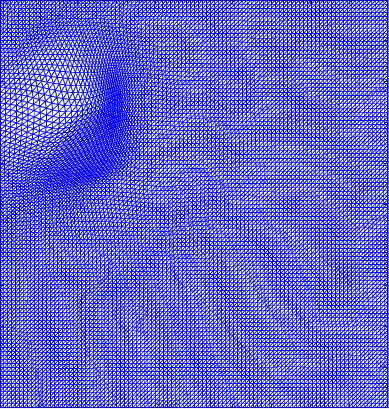}
    \caption{Recovered Sparse Deformation}
    \label{subfig: grid recovered sparse}
    \end{subfigure}
    \caption{Comparison Between Ground-Truth Maps and Recovered Maps}
    \label{fig: heart 1 map comparison}
\end{figure}
Figure \ref{fig: heart 1 map comparison} shows the visualisation of Figure \ref{fig: heart 1 zoom in pics} mappings in the form of grids. Let $\mu_{1}$ and $\mu_{2}$ be the Beltrami coefficients of the registration maps from the reference frame to the ground truth frame in Figure \ref{subfig: original} and the perturbed frame in Figure \ref{subfig: perturbed} respectively. Figure \ref{subfig: grid gt low rank} shows the mapping associated to Beltrami coefficient $\mu_{1}$, and Fig \ref{subfig: grid gt sparse} shows the mapping associated to BC $\mu_{2} - \mu_{1}$. From Figure \ref{subfig: grid recovered low rank} and Fig \ref{subfig: grid recovered sparse}, we can see that our method successfully restored the normal and abnormal deformation. Figure \ref{fig: heart 1 map comparison} serves as evidence that our decomposition is meaningful, in the sense that our method does not blindly return a Beltrami Descriptor with certain periodicity, but the decomposed descriptor does carry our desired information to recover the deformation to a large extent.

\begin{table}[t]
\centering
\begin{tabular}{|l|l|l|l|}
\hline
\begin{tabular}[c]{@{}l@{}} Frame \\ without abnormal \\ deformation\end{tabular} & \begin{tabular}[c]{@{}l@{}}Frame \\ with abnormal \\ deformation\end{tabular}& 
\begin{tabular}[c]{@{}l@{}}Recovered\\ Low-Rank \\ Frame \end{tabular}&\begin{tabular}[c]{@{}l@{}}Recovered\\ Sparse\\ Frame \end{tabular} \\ \hline
\includegraphics[scale=.75]{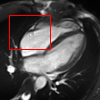}& \includegraphics[scale=.75]{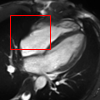}&\includegraphics[scale=.75]{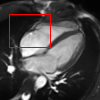}&\includegraphics[scale=.75]{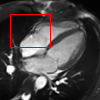}\\ \hline
\includegraphics[scale=.75]{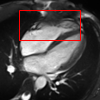}& \includegraphics[scale=.75]{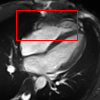}&\includegraphics[scale=.75]{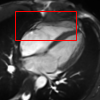}&\includegraphics[scale=.75]{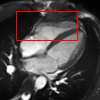}\\ \hline
\includegraphics[scale=.75]{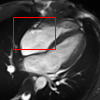}& \includegraphics[scale=.75]{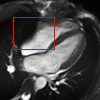}&\includegraphics[scale=.75]{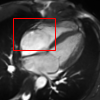}&\includegraphics[scale=.75]{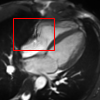}\\ \hline
\includegraphics[scale=.75]{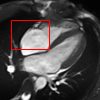}& \includegraphics[scale=.75]{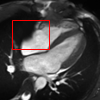}&\includegraphics[scale=.75]{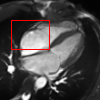}&\includegraphics[scale=.75]{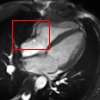}\\ \hline
\includegraphics[scale=.75]{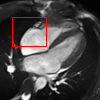}& \includegraphics[scale=.75]{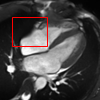}&\includegraphics[scale=.75]{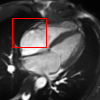}&\includegraphics[scale=.75]{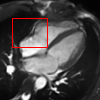}\\ \hline
\includegraphics[scale=.75]{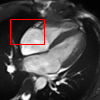}& \includegraphics[scale=.75]{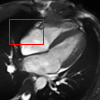}&\includegraphics[scale=.75]{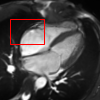}&\includegraphics[scale=.75]{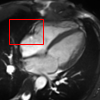}\\ \hline
\end{tabular}
\caption{Result of Example 3}
\label{table: second heart pics}
\end{table}

\bigskip

\noindent {\bf Example 3:} In this example, we test our algorithm on another medical video of a beating heart with abnormal perturbations. The original rank of the video is 36. After performing our proposed method on the Beltrami descriptor, the rank of the low-rank matrix is reduced to 20. Table \ref{table: second heart pics} displays one of the perturbation and its recovery by our method. As shown in the table, the results show that our algorithm can recover the normal and abnormal deformation. Readers can compare the first column with the third, and the second with the fourth. \par

\begin{figure}
    \begin{subfigure}{.5\textwidth}
    \centering
    \includegraphics[scale = .4]{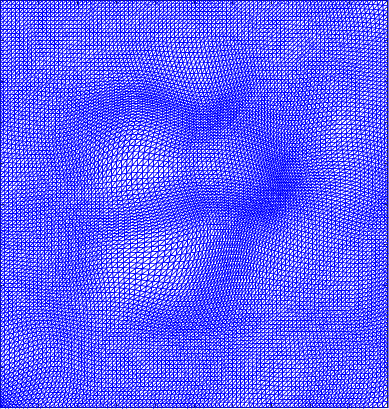}
    \caption{GT Low-Rank Deformation}
    \label{subfig: grid gt low rank second heart}
    \end{subfigure}
    \begin{subfigure}{.5\textwidth}
    \centering
    \includegraphics[scale = .4]{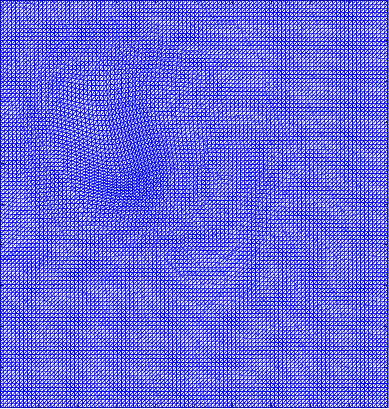}
    \caption{GT Sparse Deformation}
    \label{subfig: grid gt sparse second heart}
    \end{subfigure}
    
    \begin{subfigure}{.5\textwidth}
    \centering
    \includegraphics[scale = .4]{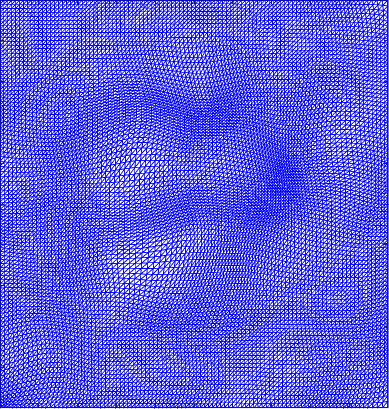}
    \caption{Recovered Low-Rank Deformation}
    \label{subfig: grid recovered low rank second heart}
    \end{subfigure}
    \begin{subfigure}{.5\textwidth}
    \centering
    \includegraphics[scale = .4]{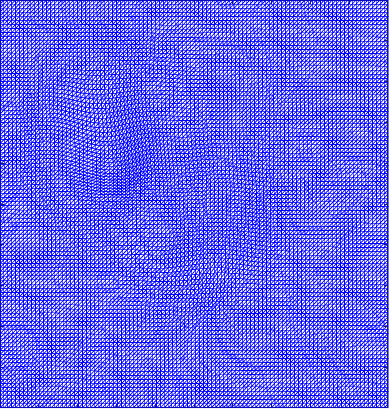}
    \caption{Recovered Sparse Deformation}
    \label{subfig: grid recovered sparse second heart}
    \end{subfigure}
    \caption{Comparison Between Ground-Truth Maps and Recovered Maps}
    \label{fig: heart 2 map comparison}
\end{figure}
Figure \ref{fig: heart 2 map comparison} shows the ground truth and recovered maps of the registration maps in the last row of Table \ref{table: first lung pics}. It can be seen that ground truth low-rank mapping shown in Figure \ref{subfig: grid gt low rank second heart} resembles the recovered low-rank mapping shown in Figure \ref{subfig: grid recovered low rank second heart}. This again shows that we could obtain a meaningful mapping from the decomposition.

\bigskip

\noindent {\bf Example 4:} In this example, we test our algorithm on another medical video of a lung under respiration. The original video capture 31 cycles with some perturbation at some frames. The rank of the input longitudinal Beltrami descriptor is 23, which was reduced to 10 after performing our algorithm on it. Table \ref{table: first lung pics} showed the pictures of one of the perturbation. \par
\begin{table}[t]
\centering
\begin{tabular}{|l|l|l|l|l|l|}
\hline
\begin{tabular}[c]{@{}l@{}} Original \\ Frame \end{tabular} & \begin{tabular}[c]{@{}l@{}}Perturbed \\ Frame \end{tabular}& 
\begin{tabular}[c]{@{}l@{}}Recovered \\ Low-Rank \\ Frame \end{tabular}&\begin{tabular}[c]{@{}l@{}}Recovered \\ Sparse\\ Frame \end{tabular}&
\begin{tabular}[c]{@{}l@{}} Recovered \\ Sparse\\ Frame on \\ Vector \\ Field \\ with \\ FFT \end{tabular}& 
\begin{tabular}[c]{@{}l@{}} Recovered \\ Sparse \\ Frame on \\ Vector \\ Field \\ without \\ FFT\end{tabular} \\ \hline
\includegraphics[scale=.5]{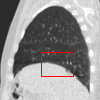}& \includegraphics[scale=.5]{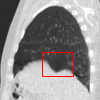}&\includegraphics[scale=.5]{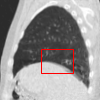}&\includegraphics[scale=.5]{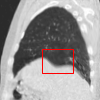}&\includegraphics[scale=.5]{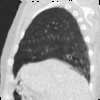}&\includegraphics[scale=.5]{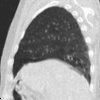}\\ \hline
\includegraphics[scale=.5]{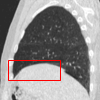}& \includegraphics[scale=.5]{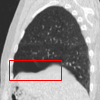}&\includegraphics[scale=.5]{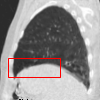}&\includegraphics[scale=.5]{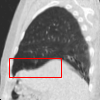}&\includegraphics[scale=.5]{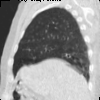}&\includegraphics[scale=.5]{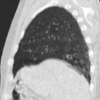}\\ \hline
\includegraphics[scale=.5]{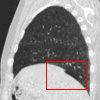}& \includegraphics[scale=.5]{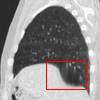}&\includegraphics[scale=.5]{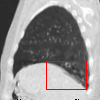}&\includegraphics[scale=.5]{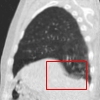}&\includegraphics[scale=.5]{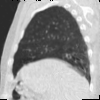}&\includegraphics[scale=.5]{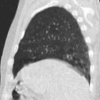}\\ \hline
\includegraphics[scale=.5]{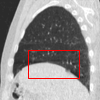}& \includegraphics[scale=.5]{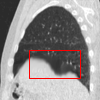}&\includegraphics[scale=.5]{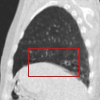}&\includegraphics[scale=.5]{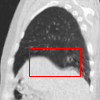}&\includegraphics[scale=.5]{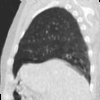}&\includegraphics[scale=.5]{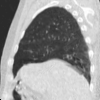}\\ \hline
\includegraphics[scale=.5]{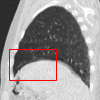}& \includegraphics[scale=.5]{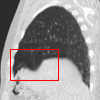}&\includegraphics[scale=.5]{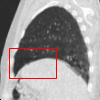}&\includegraphics[scale=.5]{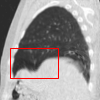}&\includegraphics[scale=.5]{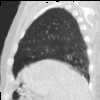}&\includegraphics[scale=.5]{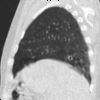}\\ \hline
\includegraphics[scale=.5]{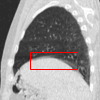}& \includegraphics[scale=.5]{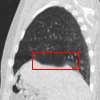}&\includegraphics[scale=.5]{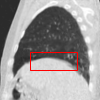}&\includegraphics[scale=.5]{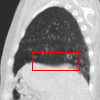}&\includegraphics[scale=.5]{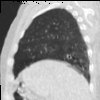}&\includegraphics[scale=.5]{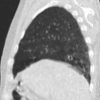}\\ \hline
\includegraphics[scale=.5]{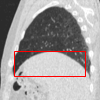}& \includegraphics[scale=.5]{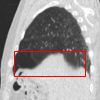}&\includegraphics[scale=.5]{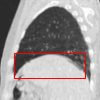}&\includegraphics[scale=.5]{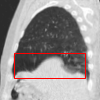}&\includegraphics[scale=.5]{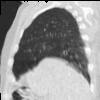}&\includegraphics[scale=.5]{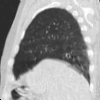}\\ \hline
\end{tabular}
\caption{Result of Example 4}
\label{table: first lung pics}
\end{table}
In addition to running this experiment on our algorithm, we again test decomposing the vector field matrix as in Example 1. We stacked the registration deformation vector fields from the reference image to all other frames in the video into one giant matrix over complex field. Then, running the complex low-rank and sparse component pursuit on the matrix gave the last two columns in Table \ref{table: first lung pics}. It is clear that the decomposed sparse matrix can barely capture any abnormal deformation as does the Beltrami descriptor. Our experiment once again shows that applying the algorithm to decompose on vector field matrices is not the best choice. \par

\begin{figure}
    \begin{subfigure}{.5\textwidth}
    \centering
    \includegraphics[scale = .4]{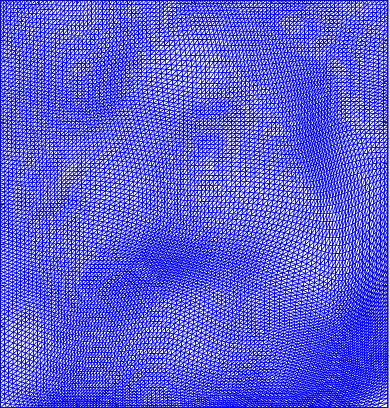}
    \caption{GT Low-Rank Deformation}
    \label{subfig: grid gt low rank lung}
    \end{subfigure}
    \begin{subfigure}{.5\textwidth}
    \centering
    \includegraphics[scale = .4]{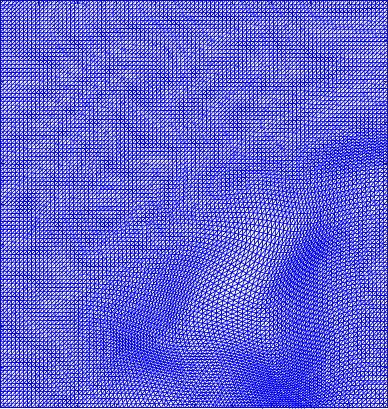}
    \caption{GT Sparse Deformation}
    \label{subfig: grid gt sparse lung}
    \end{subfigure}
    
    \begin{subfigure}{.5\textwidth}
    \centering
    \includegraphics[scale = .4]{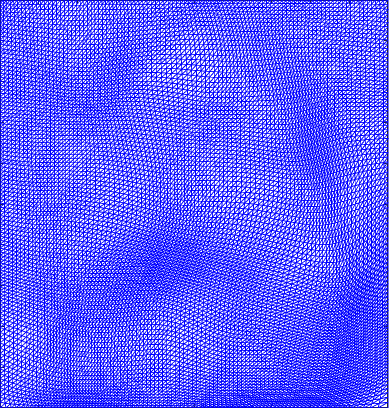}
    \caption{Recovered Low-Rank Deformation}
    \label{subfig: grid recovered low rank lung}
    \end{subfigure}
    \begin{subfigure}{.5\textwidth}
    \centering
    \includegraphics[scale = .4]{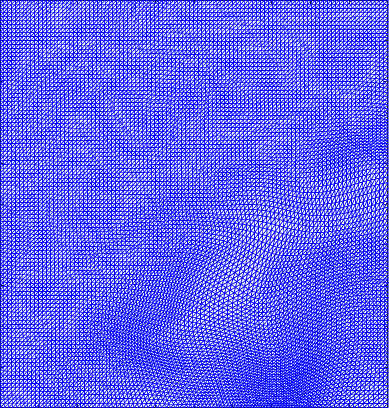}
    \caption{Recovered Sparse Deformation}
    \label{subfig: grid recovered sparse lung}
    \end{subfigure}
    \caption{Comparison Between Ground-Truth Maps and Recovered Maps}
    \label{fig: lung map comparison}
\end{figure}
Figure \ref{fig: lung map comparison} displays the 4 mappings as in Example 3 and 4. Again, we can see that after the decomposition of the Beltrami descriptor, the decpmposed mappings to large extent resemble the corresponding ones.

\begin{table}[t]
\centering
\begin{tabular}{|l|l|l|l|}
\hline
\begin{tabular}[c]{@{}l@{}} Frame \\ without abnormal \\ deformation\end{tabular} & \begin{tabular}[c]{@{}l@{}}Frame \\ with abnormal \\ deformation\end{tabular}& 
\begin{tabular}[c]{@{}l@{}}Recovered \\ Low-Rank \\ Frame \end{tabular}&\begin{tabular}[c]{@{}l@{}}Recovered \\ Sparse\\ Frame \end{tabular} \\ \hline
\includegraphics[scale=.75]{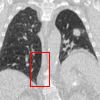}& \includegraphics[scale=.75]{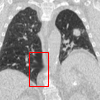}&\includegraphics[scale=.75]{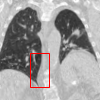}&\includegraphics[scale=.75]{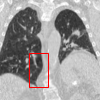}\\ \hline
\includegraphics[scale=.75]{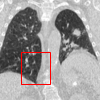}& \includegraphics[scale=.75]{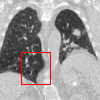}&\includegraphics[scale=.75]{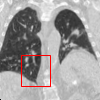}&\includegraphics[scale=.75]{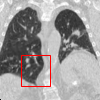}\\ \hline
\includegraphics[scale=.75]{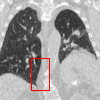}& \includegraphics[scale=.75]{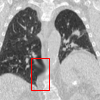}&\includegraphics[scale=.75]{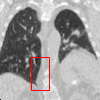}&\includegraphics[scale=.75]{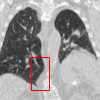}\\ \hline
\includegraphics[scale=.75]{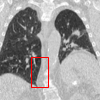}& \includegraphics[scale=.75]{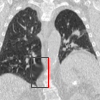}&\includegraphics[scale=.75]{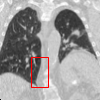}&\includegraphics[scale=.75]{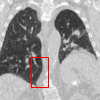}\\ \hline
\end{tabular}
\caption{Result of Example 5}
\label{table: second lung pics}
\end{table}

\bigskip

\noindent {\bf Example 5:} In this example, we test our proposed method on another medical video of a breathing lung with abnormal perturbation. The orginal video captures 36 cycles. The rank of the Beltrami descriptor matrix is 26. Our proposed method recovers the low rank matrix with rank 12. Table \ref{table: second lung pics} displays the results of one of the perturbation using our algorithm. Again, our proposed method effectively decompose the longitudinal deformation into the normal periodic component and abnormal component.\par

\bigskip

Finally, we summarize the rank of the decomposed sparse component for each example in Table 6. The ranks of the original input Beltrami descriptors are also recorded. Note that our proposed algorithm can effectively obtain the sparse component that reduces the rank. The rank of the sparse component closely resemble to the rank of the Beltrami descriptor of the video without abnormal perturbations.
\begin{table}
\centering
\begin{tabular}{|l|l|l|l|l|l|}
\hline
 &
  \begin{tabular}[c]{@{}l@{}}size of input\\ matrix\end{tabular} &
  \begin{tabular}[c]{@{}l@{}}rank of input\\ matrix without\\ perturbation\end{tabular} &
  \begin{tabular}[c]{@{}l@{}}rank of input\\ matrix with\\ perturbation\end{tabular} &
  \begin{tabular}[c]{@{}l@{}}rank of \\ recovered\\ low-rank matrix\end{tabular} &
  \begin{tabular}[c]{@{}l@{}} Decomposition \\ Time (s) \end{tabular} 
  \\ \hline
\begin{tabular}[c]{@{}l@{}}Synthetic Circle\\ Example\end{tabular} & $19602 \times 431$ & 24 & 47 & 27 & 167.06\\ \hline
\begin{tabular}[c]{@{}l@{}}First Heart\\ Example\end{tabular}      & $19602 \times 341$ & 11 & 15 & 11 & 44.80 \\ \hline
\begin{tabular}[c]{@{}l@{}}Second Heart\\ Example\end{tabular}     & $19602 \times 378$ & 18 & 36 & 20 & 98.41\\ \hline
\begin{tabular}[c]{@{}l@{}}First Lung\\ Example\end{tabular}       & $19602 \times 310$ & 10 & 23 & 10 & 21.98\\ \hline
\begin{tabular}[c]{@{}l@{}}Second Lung\\ Example\end{tabular}      & $19602 \times 360$ & 10 & 26 & 12 & 79.96\\ \hline
\end{tabular}
\caption{Summary of Recovered Rank by Our Proposed Algorithm}
\label{table: rank summary}
\end{table}


\section{Conclusion and future works}\label{sec: conclusion}
We address the problem of decomposing a longitudinal deformation into the normal periodic component and the abnormal irregular component. Our strategy is to represent the longitudinal deformation by the proposed Beltrami descriptor and apply RPCA on it. The low rank part effectively extracts the normal component, while the sparse part effectively captures the irregular deformation. The Beltrami descriptor describes the geometric information about the deformation, and hence performing the decomposition on the Beltrami descriptor yields meaningful results. In particular, we can prove that the extracted abnormal motion is guaranteed to be bijective under suitable choice of parameters. Extensive experiments on both synthetic and real data give encouraging results.

In this paper, we applied Low-Rank Sparse Matrix Pursuit to videos that are subject to periodic motion. We conjectured that this method, with some variations, can also be applied to videos subject to some constant deformations. For example, we might want to segment out a walking pedestrian with vehicles moving from left to right in the background. We hypothesized that the Beltrami descriptor could also be obtained using some registration techniques, such as the optical flow method. The descriptor of the moving background again should be of low-rank, and the pedestrian should be described by the sparse descriptor. This is our potential continuation of this project.


%
%

\bibliographystyle{spmpsci}      
\bibliography{citation.bib}   

\begin{thebibliography}{10}
\providecommand{\url}[1]{{#1}}
\providecommand{\urlprefix}{URL }
\expandafter\ifx\csname urlstyle\endcsname\relax
  \providecommand{\doi}[1]{DOI~\discretionary{}{}{}#1}\else
  \providecommand{\doi}{DOI~\discretionary{}{}{}\begingroup
  \urlstyle{rm}\Url}\fi

\bibitem{trouve_2005}
Beg, M.F., Miller, M., Trouvé, A., Younes, L.: Computing large deformation
  metric mappings via geodesic flows of diffeomorphisms.
\newblock International Journal of Computer Vision \textbf{61}, 139--157
  (2005).
\newblock \doi{10.1023/B:VISI.0000043755.93987.aa}

\bibitem{cai_2008}
Cai, J.F., Candes, E.J., Shen, Z.: A singular value thresholding algorithm for
  matrix completion (2008)

\bibitem{wright_2009}
Cand{\`{e}}s, E.J., Li, X., Ma, Y., Wright, J.: Robust principal component
  analysis?
\newblock Journal of ACM \textbf{58 (3)} (2011).
\newblock \urlprefix\url{https://doi.org/10.1145/1970392.1970395}

\bibitem{cao_2016}
{Cao}, W., {Wang}, Y., {Sun}, J., {Meng}, D., {Yang}, C., {Cichocki}, A., {Xu},
  Z.: Total variation regularized tensor rpca for background subtraction from
  compressive measurements.
\newblock IEEE Transactions on Image Processing \textbf{25}(9), 4075--4090
  (2016).
\newblock \doi{10.1109/TIP.2016.2579262}

\bibitem{LuiQCAD}
Chan, H., Li, L., Lui, L.: Quasi-conformal statistical shape analysis of
  hippocampal surfaces for {A}lzheimer's disease analysis.
\newblock Journal of Neurocomputing \textbf{175}(A), 177--187 (2016)

\bibitem{chan2018topology}
Chan, H.L., Yan, S., Lui, L.M., Tai, X.C.: Topology-preserving image
  segmentation by beltrami representation of shapes.
\newblock Journal of Mathematical Imaging and Vision \textbf{60}(3), 401--421
  (2018)

\bibitem{chan2020quasi}
Chan, H.L., Yuen, H.M., Au, C.T., Chan, K.C.C., Li, A.M., Lui, L.M.:
  Quasi-conformal geometry based local deformation analysis of lateral
  cephalogram for childhood osa classification.
\newblock arXiv preprint arXiv:2006.11408  (2020)

\bibitem{choi2020shape}
Choi, G.P., Qiu, D., Lui, L.M.: Shape analysis via inconsistent surface
  registration.
\newblock Proceedings of the Royal Society A \textbf{476}(2242), 20200,147
  (2020)

\bibitem{LuiTooth}
Choi, P., Chan, H., Yong, R., Ranjitkar, S., Brook, A., Townsend, G., Chen, K.,
  Lui, L.: Tooth morphometry using {Q}uasi-conformal theory.
\newblock Pattern Recognition \textbf{99}, 107,064 (2020)

\bibitem{choi2015flash}
Choi, P.T., Lam, K.C., Lui, L.M.: Flash: Fast landmark aligned spherical
  harmonic parameterization for genus-0 closed brain surfaces.
\newblock SIAM Journal on Imaging Sciences \textbf{8}(1), 67--94 (2015)

\bibitem{ebadi_2018}
{Ebadi}, S.E., {Izquierdo}, E.: Foreground segmentation with tree-structured
  sparse rpca.
\newblock IEEE Transactions on Pattern Analysis and Machine Intelligence
  \textbf{40}(9), 2273--2280 (2018).
\newblock \doi{10.1109/TPAMI.2017.2745573}

\bibitem{gabay_1983}
Gabay, D.: Chapter ix applications of the method of multipliers to variational
  inequalities.
\newblock In: M.~Fortin, R.~Glowinski (eds.) Augmented Lagrangian Methods:
  Applications to the Numerical Solution of Boundary-Value Problems,
  \emph{Studies in Mathematics and Its Applications}, vol.~15, pp. 299 -- 331.
  Elsevier (1983).
\newblock \doi{https://doi.org/10.1016/S0168-2024(08)70034-1}.
\newblock
  \urlprefix\url{http://www.sciencedirect.com/science/article/pii/S0168202408700341}

\bibitem{gabay_1976}
Gabay, D., Mercier, B.: A dual algorithm for the solution of nonlinear
  variational problems via finite element approximation.
\newblock Computers \& Mathematics with Applications \textbf{2}(1), 17 -- 40
  (1976).
\newblock \doi{https://doi.org/10.1016/0898-1221(76)90003-1}.
\newblock
  \urlprefix\url{http://www.sciencedirect.com/science/article/pii/0898122176900031}

\bibitem{gardiner_2000}
Gardiner, F., Lakic, N., Society, A.M.: Quasiconformal Teichmuller Theory.
\newblock Mathematical surveys and monographs. American Mathematical Society
  (2000).
\newblock \urlprefix\url{https://books.google.de/books?id=BLfyBwAAQBAJ}

\bibitem{gibson_1984}
Gilmartin~JJ, G.G.: Abnormalities of chest wall motion in patients with chronic
  airflow obstruction.
\newblock Thorax  (1984)

\bibitem{glowinski_2013}
Glowinski, R.: Numerical Methods for Nonlinear Variational Problems.
\newblock Scientific Computation. Springer Berlin Heidelberg (2013).
\newblock \urlprefix\url{https://books.google.co.uk/books?id=vGrwCAAAQBAJ}

\bibitem{glowinsku_1989}
Glowinski, R., Le~Tallec, P.: Augmented Lagrangian and Operator-Splitting
  Methods in Nonlinear Mechanics.
\newblock Society for Industrial and Applied Mathematics (1989).
\newblock \doi{10.1137/1.9781611970838}.
\newblock
  \urlprefix\url{https://epubs.siam.org/doi/abs/10.1137/1.9781611970838}

\bibitem{he_1998}
He, B., Yang, H.: Some convergence properties of a method of multipliers for
  linearly constrained monotone variational inequalities.
\newblock Operations Research Letters \textbf{23}(3), 151 -- 161 (1998).
\newblock \doi{https://doi.org/10.1016/S0167-6377(98)00044-3}.
\newblock
  \urlprefix\url{http://www.sciencedirect.com/science/article/pii/S0167637798000443}

\bibitem{heaton_2020}
Heaton~J, Y.S.: Premature atrial contractions.
\newblock StatPearls [Internet]. Treasure Island (FL): StatPearls Publishing
  (2021).
\newblock \urlprefix\url{https://www.ncbi.nlm.nih.gov/books/NBK559204/}

\bibitem{hong_2012}
Hong, M., Luo, Z.Q.: On the linear convergence of the alternating direction
  method of multipliers (2012)

\bibitem{islam_2013}
Islam, M., Kabir, M.: A new feature-based image registration algorithm.
\newblock Computer Technology and Application \textbf{4}, 79--84 (2013)

\bibitem{javed_2015}
{Javed}, S., {Oh}, S.H., {Sobral}, A., {Bouwmans}, T., {Jung}, S.K.: Background
  subtraction via superpixel-based online matrix decomposition with structured
  foreground constraints.
\newblock In: 2015 IEEE International Conference on Computer Vision Workshop
  (ICCVW), pp. 930--938 (2015).
\newblock \doi{10.1109/ICCVW.2015.123}

\bibitem{Morphlet}
Kaplan, J., Donoho, D.: The morphlet transform: A multiscale representation for
  diffeomorphisms.
\newblock Proceedings of the Workshop on Image Registration in Deformable
  Environments pp. 21--30 (2006)

\bibitem{chan_2012}
{Kumar}, A., {Chan}, T.: Iris recognition using quaternionic sparse orientation
  code (qsoc).
\newblock In: 2012 IEEE Computer Society Conference on Computer Vision and
  Pattern Recognition Workshops, pp. 59--64 (2012).
\newblock \doi{10.1109/CVPRW.2012.6239216}

\bibitem{LuiWaveletBC}
Lam, K., Ng, T., Lui, L.: Multiscale representation of deformation via
  {B}eltrami coefficients.
\newblock SIAM Journal of Multiscale Modeling and Simulation \textbf{15}(2),
  864--891 (2017)

\bibitem{lam_2014}
Lam, K.C., Lui, L.M.: Landmark- and {I}ntensity-based registration with large
  deformations via {Q}uasi-conformal maps.
\newblock SIAM Journal on Imaging Sciences \textbf{7}(4), 2364--2392 (2014).
\newblock \doi{10.1137/130943406}.
\newblock \urlprefix\url{https://doi.org/10.1137/130943406}

\bibitem{lee2016landmark}
Lee, Y.T., Lam, K.C., Lui, L.M.: Landmark-matching transformation with large
  deformation via n-dimensional quasi-conformal maps.
\newblock Journal of Scientific Computing \textbf{67}(3), 926--954 (2016)

\bibitem{lehto_2011}
Lehto, O., Virtanen, K.: Quasiconformal mappings in the plane (2011)

\bibitem{li_2019}
Li, Y., Liu, G., Liu, Q., Sun, Y., Chen, S.: Moving object detection via
  segmentation and saliency constrained rpca.
\newblock Neurocomputing \textbf{323}, 352 -- 362 (2019).
\newblock \doi{https://doi.org/10.1016/j.neucom.2018.10.012}.
\newblock
  \urlprefix\url{http://www.sciencedirect.com/science/article/pii/S0925231218311883}

\bibitem{lin_2010}
Lin, Z., Chen, M., Ma, Y.: The augmented lagrange multiplier method for exact
  recovery of corrupted low-rank matrices.
\newblock Mathematical Programming \textbf{9} (2010)

\bibitem{lin_2009}
Lin, Z., Ganesh, A., Wright, J., Wu, L., Chen, M., Ma, Y.: Fast convex
  optimization algorithms for exact recovery of a corrupted low-rank matrix.
\newblock In: In Intl. Workshop on Comp. Adv. in Multi-Sensor Adapt.
  Processing, Aruba, Dutch Antilles (2009)

\bibitem{LuiHP}
Lui, L., Wong, T., Gu, X., Thompson, P., Chan, T., Yau, S.: {S}hape-based
  diffeomorphic registration on hippocampal surfaces using {B}eltrami
  holomorphic flow.
\newblock Medical Image Computing and Computer Assisted Intervention(MICCAI),
  art II, LNCS 6362 pp. 323--330 (2010)

\bibitem{LuiIPI}
Lui, L., Wong, T., Zeng, W., Gu, X., Thompson, P., Chan, T., Yau, S.: Detection
  of shape deformities using {Y}amabe flow and {B}eltrami coefficients.
\newblock Journal of Inverse Problem and Imaging (IPI) \textbf{4}(2), 311--333
  (2010)

\bibitem{lui_2012}
Lui, L.M., Lam, K.C., Wong, T.W., Gu, X.: Texture map and video compression
  using {B}eltrami representation.
\newblock SIAM Journal on Imaging Sciences \textbf{6}(4), 1880--1902 (2013)

\bibitem{lui2014teichmuller}
Lui, L.M., Lam, K.C., Yau, S.T., Gu, X.: Teichmuller mapping (t-map) and its
  applications to landmark matching registration.
\newblock SIAM Journal on Imaging Sciences \textbf{7}(1), 391--426 (2014)

\bibitem{lui2010optimized}
Lui, L.M., Thiruvenkadam, S., Wang, Y., Thompson, P.M., Chan, T.F.: Optimized
  conformal surface registration with shape-based landmark matching.
\newblock SIAM Journal on Imaging Sciences \textbf{3}(1), 52--78 (2010)

\bibitem{lui2014geometric}
Lui, L.M., Wen, C.: Geometric registration of high-genus surfaces.
\newblock SIAM Journal on Imaging Sciences \textbf{7}(1), 337--365 (2014)

\bibitem{lui2012optimization}
Lui, L.M., Wong, T.W., Zeng, W., Gu, X., Thompson, P.M., Chan, T.F., Yau, S.T.:
  Optimization of surface registrations using beltrami holomorphic flow.
\newblock Journal of scientific computing \textbf{50}(3), 557--585 (2012)

\bibitem{lui2013shape}
Lui, L.M., Zeng, W., Yau, S.T., Gu, X.: Shape analysis of planar
  multiply-connected objects using conformal welding.
\newblock IEEE transactions on pattern analysis and machine intelligence
  \textbf{36}(7), 1384--1401 (2013)

\bibitem{ma_2009}
Ma, S., Goldfarb, D., Chen, L.: Fixed point and bregman iterative methods for
  matrix rank minimization (2009)

\bibitem{oreifej_2013}
{Oreifej}, O., {Li}, X., {Shah}, M.: Simultaneous video stabilization and
  moving object detection in turbulence.
\newblock IEEE Transactions on Pattern Analysis and Machine Intelligence
  \textbf{35}(2), 450--462 (2013).
\newblock \doi{10.1109/TPAMI.2012.97}

\bibitem{PengWavelet}
Peng, L., Huang, Z.Y., Jia, Y.Y.: Application of the combinatorial model of
  wavelet analysis and support vector machines in deformation analysis.
\newblock Proceedings of the 2nd International Conference on Remote Sensing,
  Environment and Transportation Engineering (RSETE) pp. 1--4 (2012)

\bibitem{qiu2020inconsistent}
Qiu, D., Lui, L.M.: Inconsistent surface registration via optimization of
  mapping distortions.
\newblock Journal of Scientific Computing \textbf{83}(3), 1--31 (2020)

\bibitem{sobral_2015}
{Sobral}, A., {Bouwmans}, T., {ZahZah}, E.: Double-constrained rpca based on
  saliency maps for foreground detection in automated maritime surveillance.
\newblock In: 2015 12th IEEE International Conference on Advanced Video and
  Signal Based Surveillance (AVSS), pp. 1--6 (2015).
\newblock \doi{10.1109/AVSS.2015.7301753}

\bibitem{sotiras_2013}
{Sotiras}, A., {Davatzikos}, C., {Paragios}, N.: Deformable medical image
  registration: A survey.
\newblock IEEE Transactions on Medical Imaging \textbf{32}(7), 1153--1190
  (2013).
\newblock \doi{10.1109/TMI.2013.2265603}

\bibitem{Taimuri}
Taimouri, V., Hua, J.: Deformation similarity measurement in quasi-conformal
  shape space.
\newblock Graphical Models \textbf{76}, 57--69 (2014)

\bibitem{tibshirani_1996}
Tibshirani, R.: Regression shrinkage and selection via the lasso.
\newblock Journal of the Royal Statistical Society. Series B (Methodological)
  \textbf{58}(1), 267--288 (1996).
\newblock \urlprefix\url{http://www.jstor.org/stable/2346178}

\bibitem{TongMultiVector}
Tong, Y., Lombeyda, S., Hirani, A.N., Desbrun, M.: Discrete multiscale vector
  field decomposition.
\newblock ACM Trans. Graphics (TOG) \textbf{22}, 445--452 (2003)

\bibitem{vercauteren_2009}
Vercauteren, T., Pennec, X., Perchant, A., Ayache, N.: Diffeomorphic demons:
  Efficient non-parametric image registration.
\newblock NeuroImage \textbf{45}(1, Supplement 1), S61 -- S72 (2009).
\newblock \doi{https://doi.org/10.1016/j.neuroimage.2008.10.040}.
\newblock
  \urlprefix\url{http://www.sciencedirect.com/science/article/pii/S1053811908011683}.
\newblock Mathematics in Brain Imaging

\bibitem{wang2007brain}
Wang, Y., Lui, L.M., Gu, X., Hayashi, K.M., Chan, T.F., Toga, A.W., Thompson,
  P.M., Yau, S.T.: Brain surface conformal parameterization using riemann
  surface structure.
\newblock IEEE transactions on medical imaging \textbf{26}(6), 853--865 (2007)

\bibitem{yao_2001}
{Yao Jianchao}: Image registration based on both feature and intensity
  matching.
\newblock In: 2001 IEEE International Conference on Acoustics, Speech, and
  Signal Processing. Proceedings (Cat. No.01CH37221), vol.~3, pp. 1693--1696
  vol.3 (2001)

\bibitem{yasein_2008}
{Yasein}, M.S., {Agathoklis}, P.: A feature-based image registration technique
  for images of different scale.
\newblock In: 2008 IEEE International Symposium on Circuits and Systems, pp.
  3558--3561 (2008)

\bibitem{ye_2007}
Ye, C.H., Yuan, X.M.: A descent method for structured monotone variational
  inequalities.
\newblock Optimization Methods and Software \textbf{22}(2), 329--338 (2007).
\newblock \doi{10.1080/10556780600552693}.
\newblock \urlprefix\url{https://doi.org/10.1080/10556780600552693}

\bibitem{yuan_2009}
Yuan, X., Yang, J.: Sparse and low rank matrix decomposition via alternating
  direction method.
\newblock Pacific Journal of Optimization \textbf{9} (2009)

\bibitem{zeng2010shape}
Zeng, W., Lui, L.M., Shi, L., Wang, D., Chu, W.C., Cheng, J.C., Hua, J., Yau,
  S.T., Gu, X.: Shape analysis of vestibular systems in adolescent idiopathic
  scoliosis using geodesic spectra.
\newblock In: International Conference on Medical Image Computing and
  Computer-Assisted Intervention, pp. 538--546. Springer (2010)

\bibitem{zhou_2011}
Zhou, T., Tao, D.: Godec: Randomized lowrank \& sparse matrix decomposition in
  noisy case.
\newblock pp. 33--40 (2011)

\bibitem{zhou_2013}
{Zhou}, X., {Yang}, C., {Yu}, W.: Moving object detection by detecting
  contiguous outliers in the low-rank representation.
\newblock IEEE Transactions on Pattern Analysis and Machine Intelligence
  \textbf{35}(3), 597--610 (2013).
\newblock \doi{10.1109/TPAMI.2012.132}

\bibitem{zitova_2003}
Zitová, B., Flusser, J.: Image registration methods: a survey.
\newblock Image and Vision Computing \textbf{21}(11), 977--1000 (2003).
\newblock \doi{https://doi.org/10.1016/S0262-8856(03)00137-9}.
\newblock
  \urlprefix\url{https://www.sciencedirect.com/science/article/pii/S0262885603001379}

\end{thebibliography}


\end{document}